\documentclass[]{jingdong}

\usepackage{amsfonts}
\usepackage{nicefrac}
\usepackage{enumitem}
\usepackage{pifont}
\newcommand{\cmark}{\textcolor{green!70!black}{\ding{51}}}
\newcommand{\xmark}{\textcolor{red}{\ding{55}}}
\usepackage{amsmath,amssymb}
\usepackage{amsthm}
\usepackage{algorithm}
\usepackage{algpseudocode}
\usepackage{colortbl}
\usepackage[utf8]{inputenc}
\microtypesetup{expansion=false}
\graphicspath{{figures/}}
\title{Self-Distilled RLVR}

\author[1,2,*]{Chenxu Yang}
\author[1,2,*]{Chuanyu Qin}
\author[3,*]{Qingyi Si}
\author[1,2]{Minghui Chen}
\author[1,2]{Naibin Gu}
\author[1,2]{Dingyu Yao}
\author[1,2,\dagger]{Zheng Lin}
\author[1]{Weiping Wang}
\author[3]{Jiaqi Wang}
\author[3]{Nan Duan}

\affiliation[1]{Institute of Information Engineering, Chinese Academy of Sciences, Beijing, China}
\affiliation[2]{School of Cyber Security, University of Chinese Academy of Sciences, Beijing, China}
\affiliation[3]{JD.COM}

\contribution[*]{Equal contribution.}
\contribution[\dagger]{Corresponding author.}
\checkdata[Email]{\email{yangchenxu@iie.ac.cn}; \email{linzheng@iie.ac.cn}; \email{siqingyi.phoebus@jd.com}}

\abstract{On-policy distillation (OPD) has become a popular training paradigm in the LLM community. This paradigm selects a larger model as the teacher to provide dense, fine-grained signals for each sampled trajectory, in contrast to reinforcement learning with verifiable rewards (RLVR), which only obtains sparse signals from verifiable outcomes in the environment. Recently, the community has explored on-policy self-distillation (OPSD), where the same model serves as both teacher and student, with the teacher receiving additional privileged information such as reference answers to enable self-evolution. This paper demonstrates that learning signals solely derived from the privileged teacher result in severe information leakage and unstable long-term training. Accordingly, we identify the optimal niche for self-distillation and propose \textbf{RLSD} (\textbf{RL}VR with \textbf{S}elf-\textbf{D}istillation). Specifically, we leverage self-distillation to obtain token-level policy differences for determining fine-grained update magnitudes, while continuing to use RLVR to derive reliable update directions from environmental feedback (e.g., response correctness). This enables RLSD to simultaneously harness the strengths of both RLVR and OPSD, achieving a higher convergence ceiling and superior training stability.}

\begin{document}

\maketitle
\enlargethispage{1mm}
\vspace{-20mm}

\begin{figure*}[!b]
  \centerline{\includegraphics[scale=0.39]{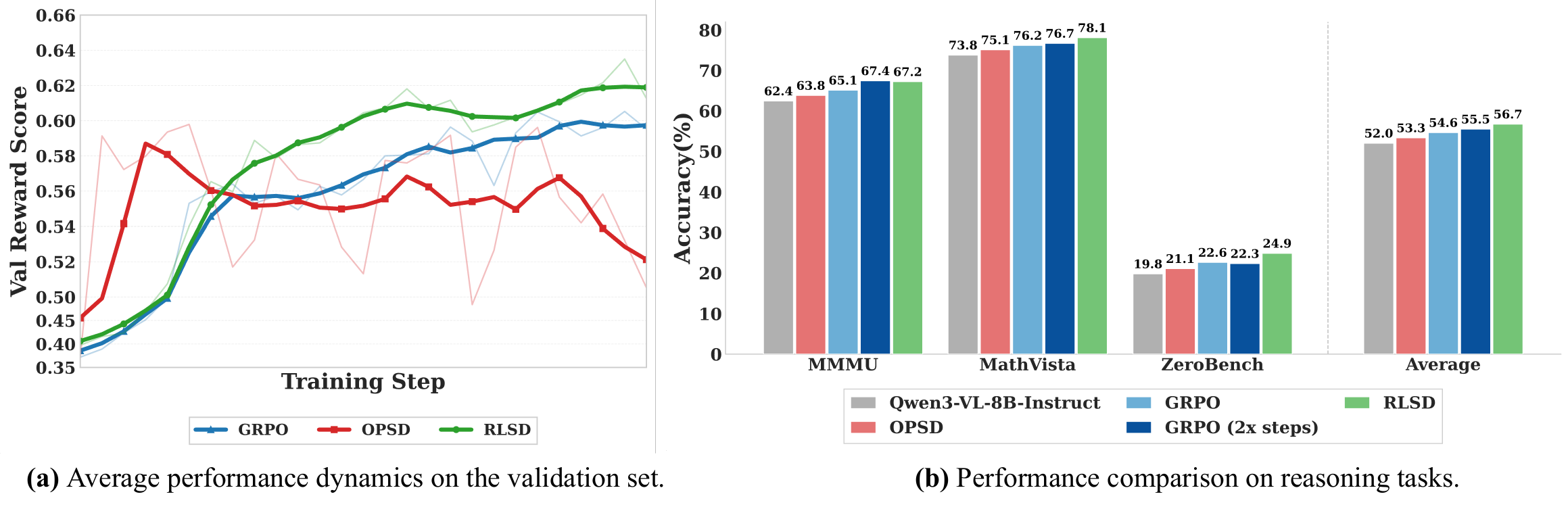}}
  \vspace{-0.1cm}
  \caption{Performance on the trained Qwen3-VL-8B-Instruct model. In (a), OPSD reaches its peak performance early and degrades, whereas RLSD inherits the training stability of GRPO while achieving a higher convergence ceiling. In (b), GRPO and RLSD report results at 200 training steps, while GRPO (2$\times$steps) reports results at 400 steps; RLSD at 200 steps already surpasses GRPO trained for twice as many steps, demonstrating faster convergence. }
  \label{fig-performance}
\end{figure*}

\newpage
\section{Introduction}
\label{sec:intro}
Reinforcement learning with verifiable rewards (RLVR) methods such as GRPO~\citep{grpo} have become a central paradigm for training large reasoning models~\citep{deepseekr1,kimiteam2026kimik25visualagentic}, where each trajectory receives only a single scalar signal determined by the outcome from the environment. On-policy distillation (OPD)~\citep{agarwal2024gkd,lu2025onpolicy} complements this by leveraging a stronger teacher model to provide dense, token-level logits as learning signals along the student's own sampled trajectories, enriching the trajectory-level supervision to the token level and thereby achieving faster convergence. Recent work has shown that OPD from advanced teachers can match or even outperform RLVR~\citep{mimo2025flash}, establishing it as an equally compelling paradigm (see Table~\ref{tab:unified} for a systematic comparison).

Despite its effectiveness, OPD relies on a separate, typically much larger teacher model, incurring substantial computational overhead. Moreover, since OPD requires computing token-level distributions over a shared vocabulary, the teacher and student models must share the same vocabulary, significantly reducing the practical scalability of this paradigm. On-policy self-distillation (OPSD)~\citep{zhao2026opsd,hbotter2026reinforcementlearningselfdistillation_sdpo} offers an appealing alternative: a single model serves as both teacher and student, where the teacher is the same model conditioned on \textit{privileged information} $r$ (such as a verified reasoning trace or environmental feedback) and the student operates solely on the input query. OPSD achieves several-fold improvements in token efficiency over GRPO without requiring any external model. However, we prove this efficiency gain is fragile: as illustrated by the red curve in Figure~\ref{fig-performance}(a), performance peaks early and subsequently deteriorates, accompanied by systematic privileged information leakage, where the model explicitly appeals to an invisible reference solution during inference despite never having access to it (see Figure~\ref{fig:case} for a representative example).
These phenomena raise a natural question: \textit{why does OPD work while OPSD fails?}



We trace the answer to a structural distinction between the two settings. In OPD, teacher and student observe the same input (information-symmetric), so the teacher's dense signal reflects superior reasoning under shared information access. In OPSD, the teacher conditions on privileged information that the student cannot observe (information-asymmetric), creating a fundamental mismatch. We prove that this asymmetry renders the OPSD objective \textit{ill-posed}: it contains an irreducible mutual information gap $I(Y_t; R \mid X, Y_{<t}) > 0$ that the student can never eliminate, regardless of its capacity (Theorem~\ref{thm:kl-decomp}). At the gradient level, we show that while the expected OPSD gradient is benign, the per-sample gradients carry an $r$-specific deviation whose variance is proportional to this mutual information. These theoretical findings are directly corroborated by our diagnostic experiments: the on-policy KL divergence between teacher and student stagnates under OPSD with no sustained reduction, whereas the same metric decreases steadily under OPD, as shown in Figure~\ref{fig:pilot}. Early in training, the beneficial gradient component dominates, producing rapid improvement; as the student approaches the teacher's marginal distribution, the deviation takes over, and its path-dependent accumulation drives the model toward encoding $x \to r$ correlations in its parameters. This two-phase dynamic precisely accounts for the observed pattern of early gains followed by progressive degradation. 


\begin{table}[!t]
\centering
\caption{Comparison of training paradigms for post-training LLMs.}
\vspace{0.1cm}
\label{tab:unified}
\small
\begin{tabular}{lccccc}
\toprule
\textbf{Method} & \textbf{Trajectory} & \textbf{Efficiency}  & \textbf{Leakage Risk} & \textbf{Signal} & \textbf{Direction Anchoring} \\
\midrule
SFT & \xmark Off-policy &  \cmark High  & \cmark N/A & \cmark \ Rich & \xmark Teacher \\
RLVR (GRPO) & \cmark On-policy &  \cmark High  & \cmark N/A & \xmark \ Weak & \cmark Environment  \\
OPD & \cmark On-policy & \xmark Low  &  \cmark N/A & \cmark \ Rich & \xmark Teacher \\
OPSD & \cmark On-policy & \cmark High & \xmark Severe & \cmark \ Rich & \xmark Teacher \\
\textbf{RLSD (Ours)} & \cmark On-policy & \cmark High & \cmark N/A & \cmark \ Rich & \cmark Environment \\
\bottomrule
\end{tabular}
\end{table}

Our analysis pinpoints the root cause: in all distribution-matching formulations, the teacher's privileged evaluation $P_T(y_t \mid r)$ enters the gradient \textit{direction}, making leakage structurally unavoidable regardless of how the distillation target is compressed. Yet the evidence ratio $P_T(y_t)/P_S(y_t)$ also carries a useful signal: it measures how much the privileged information revises the model's belief about each token. The challenge is therefore not to discard this signal, but to change how it is used. A key insight underlying our design is that the signals governing update \textit{direction} and update \textit{magnitude} have asymmetric requirements: \textbf{the directional signal can be sparse but must be reliable}, as an erroneous direction harms the policy; \textbf{the magnitude signal, by contrast, benefits from being as dense as possible to enable fine-grained discrimination among tokens}. 

We propose \textbf{RL}VR with \textbf{S}elf-\textbf{D}istillation (\textbf{RLSD}), which instantiates this principle by repurposing the teacher from a generative target to a \textit{magnitude evaluator}. Specifically, the environment reward determines the \textit{direction} of each token's update (reinforcement or penalization), while the teacher's evidence ratio modulates only the \textit{magnitude}. This decoupling anchors gradient directions exclusively to the reliable environment reward, while retaining the teacher's dense, token-level assessment for fine-grained credit discrimination across token positions. Notably, prior approaches to token-level credit assignment, such as value-function estimation in PPO~\citep{schulman2017proximalpolicyoptimizationalgorithmsppo} and various credit assignment methods~\citep{xie2025unlocking,li2026outcome}, typically require training auxiliary networks or incur substantial additional overhead, yet still produce noisy estimates. In contrast, self-distillation provides a natural and virtually cost-free source of per-token credit information, requiring only a single additional forward pass. As summarized in Table~\ref{tab:unified}, RLSD is the only paradigm that simultaneously achieves on-policy training, high token efficiency, rich update signals, and environment-anchored optimization, while serving as a drop-in replacement for the uniform advantage in standard GRPO without any auxiliary loss or model. Our contributions are as follows:

\begin{itemize}[nosep]
    \item We identify the root cause of OPSD's failure through controlled experiments and formal analysis, proving that distribution matching under information asymmetry induces an irreducible gap that drives privileged information leakage through the gradient structure.
    \item We propose RLSD, a new training paradigm that unifies the strengths of RLVR and OPSD: the reliable environment reward governs update directions, while the privileged teacher provides rich, token-level modulation of update magnitudes.
    \item Extensive experiments demonstrate that RLSD achieves the best average accuracy across five multimodal reasoning benchmarks, outperforming the base LLM by 4.69\%, sustaining improvement beyond the point where self-distillation degrades.
\end{itemize}

\section{Preliminaries}
\label{sec:prelim}
 
\noindent \textbf{GRPO.}
Consider a language model $\pi_\theta$ trained to solve reasoning tasks.
Given a question $x$, the model generates a response $y = (y_1, \dots, y_T)$ autoregressively.
In the RLVR setting, a verifier provides a binary reward $R(x, y) \in \{0, 1\}$ indicating whether the response is correct.
Group Relative Policy Optimization (GRPO)~\citep{grpo} samples a group of $G$ responses $\{y^{(1)}, \dots, y^{(G)}\}$ from the current policy for each question $x$, and computes a sequence-level advantage for each response relative to the group:
\begin{equation}
    A^{(i)} = \frac{R(x, y^{(i)}) - \mu_G}{\sigma_G},
\end{equation}
where $\mu_G$ and $\sigma_G$ are the mean and standard deviation of the rewards within the group.
The policy is then updated via a clipped surrogate objective:
\begin{equation}
    \mathcal{L}_{\text{GRPO}}(\theta) = \mathbb{E}\left[\frac{1}{G}\sum_{i=1}^{G}\frac{1}{|y^{(i)}|}\sum_{t=1}^{|y^{(i)}|} \min\!\left(\rho_t^{(i)} A^{(i)},\; \text{clip}\!\left(\rho_t^{(i)}, 1-\epsilon, 1+\epsilon\right) A^{(i)}\right)\right],
    \label{eq:grpo}
\end{equation}
where $\rho_t^{(i)} = \pi_\theta(y_t^{(i)} \mid x, y_{<t}^{(i)}) / \pi_{\theta_{\text{old}}}(y_t^{(i)} \mid x, y_{<t}^{(i)})$ is the importance sampling ratio between the current and old policies.
A key limitation of GRPO is that all tokens within a response share the same advantage $A^{(i)}$, as the reward signal is provided only at the sequence level.

\noindent \textbf{OPD and OPSD.}
On-Policy Distillation (OPD)~\citep{agarwal2024gkd} addresses the sparse reward problem by having the student $\pi_\theta$ sample its own trajectories while a separate, typically larger, teacher model $\pi_{\hat{\theta}}$ provides dense token-level supervision.
While effective, maintaining a distinct teacher model throughout training introduces significant computational overhead.
On-Policy Self-Distillation (OPSD)~\citep{zhao2026opsd,hbotter2026reinforcementlearningselfdistillation_sdpo} eliminates this requirement by instantiating both roles from a single model $\pi_\theta$: the teacher gains its informational advantage not from greater capacity, but from conditioning on privileged information $r$ (e.g., a verified reasoning trace).

Both methods share the same training paradigm.
Given a dataset $\mathcal{S} = \{(x_i, r_i)\}_{i=1}^{N}$, the student generates on-policy rollouts $\hat{y} \sim P_S(\cdot \mid x)$, and training minimizes the per-token divergence between teacher and student distributions along the student's trajectory.
The two frameworks differ only in how the teacher is defined:
\begin{align}
    \text{Student:} \quad & P_S(\cdot \mid y_{<t}) \triangleq \pi_\theta(\cdot \mid x,\, y_{<t}), \\
    \text{OPD Teacher:} \quad & P_T(\cdot \mid y_{<t}) \triangleq \pi_{\hat{\theta}}(\cdot \mid x,\, y_{<t}), \\
    \text{OPSD Teacher:} \quad & P_T(\cdot \mid y_{<t}) \triangleq \pi_\theta(\cdot \mid x,\, r,\, y_{<t}).
\end{align}
The shared training objective is:
\begin{equation}
    \mathcal{L}_{\text{OP(S)D}}(\theta) = \mathbb{E}_{(x,r)\sim\mathcal{S}}\;\mathbb{E}_{\hat{y}\sim P_S(\cdot|x)}\left[\frac{1}{|\hat{y}|}\sum_{t=1}^{|\hat{y}|} D\!\left(P_T \;\|\; P_S\right)\right],
    \label{eq:opsd}
\end{equation}
where $D$ is a divergence measure such as the generalized Jensen-Shannon divergence.
Gradients are backpropagated only through $P_S$, while $P_T$ serves as a fixed target.
OPSD achieves high token efficiency by providing dense, per-token supervision without requiring an external teacher model.

\begin{figure}[t]
    \centering
    \begin{tcolorbox}[
        colback=red!8!white,      
        colframe=red!75!black,    
        arc=4pt,                  
        boxrule=1pt,              
        left=8pt, right=8pt,      
        top=8pt, bottom=8pt,      
    ]
    
    
    
    \textbf{OPSD-tuned Model Output (\xmark):} \\
    I need to determine if the sample mean is within $\$1$ of the population mean. To do this, I must first calculate both means.  \textcolor{blue!80!black}{ $\dots$ \textbf{<Omitted>} $\dots$ }   Given that the \textcolor{red}{\textbf{reference solution}}  uses 9 values, I will assume the sample has 9 prices  \textcolor{blue!80!black}{$\dots$ \textbf{<Omitted>}$\dots$}   But wait, \textcolor{red}{\textbf{the reference solution says}}  "No", which contradicts my calculation. Let me re-examine. $\dots$ 
    

    \end{tcolorbox}
    
    \caption{A representative example illustrating the privileged information leakage exhibited by the OPSD-trained model, where the model appeals to an invisible reference solution during inference.}
    \label{fig:case}
\end{figure}

\begin{figure*}[t]
  \centerline{\includegraphics[scale=0.46]{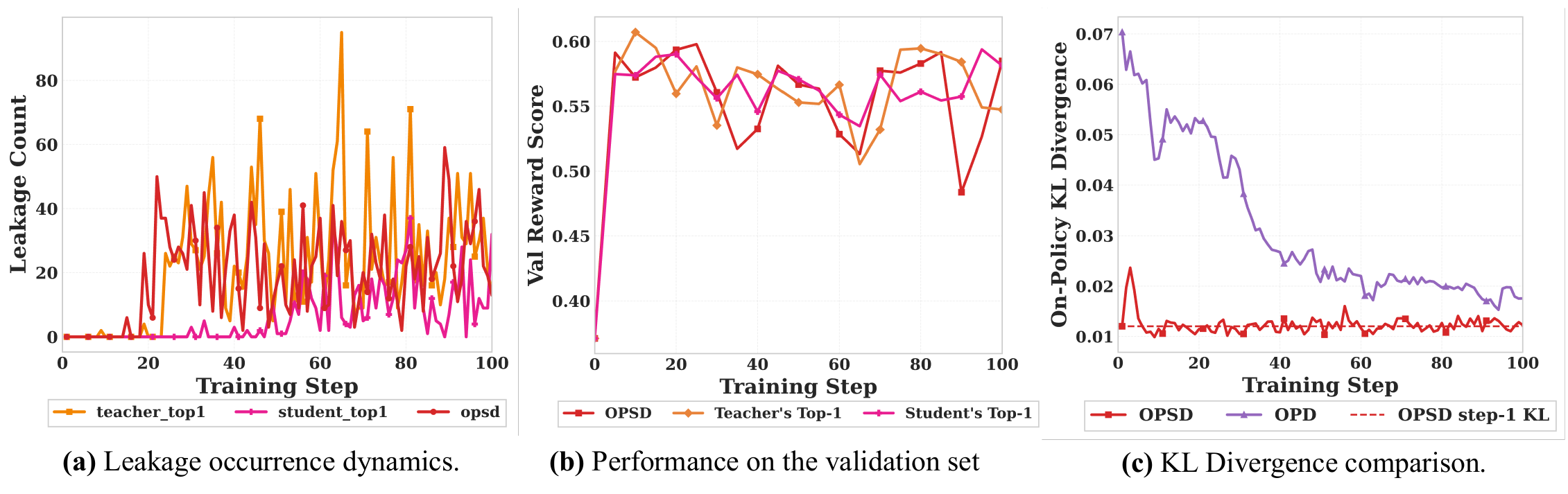}}
  \vspace{-0.1cm}
  \caption{Leakage, KL divergence, and validation performance of OPSD and its ablated variants. }
  \label{fig:pilot}
\end{figure*}

\newtheorem{example}{Example}
\newtheorem{theorem}{Theorem}

\newtheorem{lemma}{Lemma}
\newtheorem{corollary}{Corollary}
\newtheorem{definition}{Definition}
\newtheorem{proposition}{Proposition}
\newtheorem{assumption}{Assumption}
\newtheorem{remark}{Remark}


\section{Why Does OPD Work While OPSD Fails?}

\subsection{Empirical Observations: Leakage and Performance Degradation}
\label{sec:observations}

Before the formal analysis in \S\ref{sec:ill-posed}, we document the empirical phenomena that motivate it.

\noindent \textbf{Privileged information leakage.}
We first observe that OPSD-trained models systematically reference privileged information that is unavailable at inference time. Figure~\ref{fig:case} presents a representative example in which the model explicitly appeals to an invisible ``reference solution'' during generation. Such behavior is not an isolated failure. As we quantify in the following analysis, this leakage intensifies progressively over the course of training.

\noindent \textbf{Performance degradation.}
Figure~\ref{fig:pilot}(a) tracks the frequency of privileged information leakage over 100 training steps and reveals a monotonically increasing trend: the model becomes progressively more reliant on information it cannot access at test time. Figure~\ref{fig:pilot}(b) reports the corresponding validation accuracy, which peaks within the first 10\textendash 20 steps and subsequently declines, consistent with the escalating leakage.

\noindent \textbf{KL divergence stagnation.}
A further diagnostic contrasts the on-policy KL divergence between teacher and student for OPD and OPSD (Figure~\ref{fig:pilot}(c)). Under OPD, the KL divergence decreases steadily throughout training, reflecting genuine convergence. Under OPSD, the divergence drops briefly in the first few steps but then plateaus at a level comparable to its initial value, exhibiting no sustained reduction. This stagnation suggests the existence of an irreducible gap in the OPSD objective that prevents meaningful convergence, a hypothesis we formalize in the next subsection.

These observations raise two questions: \textit{why} does OPSD induce leakage, and \textit{why} does it fail to reduce the KL divergence? We address both in the following theoretical analysis.

\subsection{OPSD's Failure: An Ill-Posed Objective}
\label{sec:ill-posed}

We formalize the structural deficiency of the distribution matching paradigm underlying OPSD and related methods. Our analysis proceeds at two levels: the objective function (\S\ref{sec:ill-posed-obj}) and the gradient dynamics (\S\ref{sec:gradient-structure}).

\subsubsection{The Irreducible Mutual Information Gap}
\label{sec:ill-posed-obj}

Let $r$ denote the privileged information, drawn from the conditional distribution $P(r \mid x)$. Since any given question $x$ admits multiple semantically valid reasoning paths, $P(r \mid x)$ is a non-degenerate distribution with non-zero entropy. Even when each training instance $x_i$ is paired with a single reference trace $r_i$, from the epistemic perspective of the student, which can neither observe $r$ nor deterministically derive it from $x$, the privileged information remains an uncertain latent variable, and $P(r \mid x)$ should be treated as non-degenerate within the probabilistic modeling framework.

An optimal student policy that cannot condition on $r$ should recover the \textit{marginal} teacher distribution via the law of total probability:
\begin{equation}
    P_S^*(y_t \mid x, y_{<t}) = \mathbb{E}_{r \sim P(r \mid x, y_{<t})}\!\left[P_T(y_t \mid x, r, y_{<t})\right].
\end{equation}
Letting $\bar{P}_T(y_t) \triangleq \mathbb{E}_r[P_T(y_t \mid x, r, y_{<t})]$ denote this marginal, the \textit{ideal} distillation objective is:
\begin{equation}
    \mathcal{L}^*(\theta) = \mathbb{E}_{x}\!\left[D_{\mathrm{KL}}\!\left(\bar{P}_T(\cdot) \;\Big\|\; P_S(\cdot \mid x)\right)\right].
    \label{eq:ideal-obj}
\end{equation}

However, the OPSD objective enforces a \textit{per-sample} match $P_S(\cdot \mid x) \to P_T(\cdot \mid x, r)$ for each concrete pair $(x, r)$:
\begin{equation}
    \mathcal{L}_{\text{OPSD}}(\theta) = \mathbb{E}_{x}\;\mathbb{E}_{r \sim P(r \mid x)}\!\left[D_{\mathrm{KL}}\!\left(P_T(\cdot \mid x, r) \;\|\; P_S(\cdot \mid x)\right)\right].
    \label{eq:opsd-obj}
\end{equation}
This forces a \textit{conditionally independent} parameterization ($P_S$ does not take $r$ as input) to match a \textit{conditionally dependent} target ($P_T$ depends on $r$), constituting a fundamentally ill-posed requirement.

\begin{theorem}[KL Decomposition]
\label{thm:kl-decomp}
The OPSD objective and the ideal objective satisfy the identity:
\begin{equation}
    \mathcal{L}_{\mathrm{OPSD}} \;=\; \mathcal{L}^* \;+\; I(Y_t;\, R \mid X,\, Y_{<t}),
    \label{eq:kl-decomp}
\end{equation}
where $I(Y_t; R \mid X, Y_{<t})$ denotes the conditional mutual information between the current token $Y_t$ and the privileged information $R$ under the teacher distribution.
\end{theorem}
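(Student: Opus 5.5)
The plan is the standard "compensation identity" for KL divergences: the expected divergence from a family of distributions to any fixed $Q$ splits into the divergence from the mixture to $Q$ plus a Jensen--Shannon/mutual-information term. I will work pointwise at a fixed context $(x, y_{<t})$ and average over $X$ and $Y_{<t}$ only at the end. At that context, write $P_r(\cdot) \triangleq P_T(\cdot \mid x, r, y_{<t})$, $\bar P \triangleq \mathbb{E}_{r}[P_r]$ with $r \sim P(r \mid x, y_{<t})$ (the marginal $\bar P_T$ defined above), and $Q \triangleq P_S(\cdot \mid x, y_{<t})$. The student $Q$ does not depend on $r$; this is the only structural fact the argument uses.

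\textbf{Step 1: split the log-ratio.} For each $r$ I will insert $\bar P$ inside the logarithm:
\begin{equation*}
D_{\mathrm{KL}}(P_r \,\|\, Q) = \sum_{y} P_r(y)\log\frac{P_r(y)}{\bar P(y)} + \sum_{y} P_r(y)\log\frac{\bar P(y)}{Q(y)}.
\end{equation*}
The first sum is $D_{\mathrm{KL}}(P_r\,\|\,\bar P)$. This requires $\bar P(y)>0$ wherever $P_r(y)>0$, which always holds because $\bar P$ is a mixture that includes $P_r$.

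\textbf{Step 2: take the expectation over $r$.} Because $\log(\bar P/Q)$ does not depend on $r$, the second term averages to $\sum_y \bar P(y)\log(\bar P(y)/Q(y)) = D_{\mathrm{KL}}(\bar P\,\|\,Q)$. This is exactly the integrand of $\mathcal{L}^*$ in Eq.~\eqref{eq:ideal-obj}. The first term averages to $\mathbb{E}_r[D_{\mathrm{KL}}(P_r\,\|\,\bar P)]$. This is the standard expression for mutual information:
\begin{equation*}
\mathbb{E}_r\big[D_{\mathrm{KL}}(P_{Y_t\mid R=r}\,\|\,P_{Y_t})\big] = I(Y_t;R \mid X=x, Y_{<t}=y_{<t}),
\end{equation*}
where the joint law of $(R,Y_t)$ at this context is $P(r\mid x,y_{<t})\,P_T(y_t\mid x,r,y_{<t})$. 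Its $Y_t$-marginal is $\bar P$, so the identification holds.

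\textbf{Step 3: average over contexts.} Averaging the pointwise identity over $x$ and $y_{<t}$ (and over positions $t$, with the same weighting as in Eq.~\eqref{eq:opsd}) turns the pointwise mutual information into the conditional mutual information $I(Y_t;R\mid X,Y_{<t})$. This gives Eq.~\eqref{eq:kl-decomp}.

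The main obstacle is a consistency issue in Step 3, not the algebra. Eq.~\eqref{eq:opsd-obj} draws $r\sim P(r\mid x)$, while the marginal $\bar P_T$ uses $P(r\mid x,y_{<t})$. The identity needs the same mixing law in both places. I would handle this by stating explicitly that the expectation over $r$ in $\mathcal{L}_{\mathrm{OPSD}}$ is taken at each token under the posterior $P(r\mid x,y_{<t})$ induced along the trajectory, equivalently by taking the joint law of $(R, Y_{<t})$ as the one under which $I$ is defined. Under that convention the decomposition is an exact identity. As corollaries, $I\ge 0$ gives $\mathcal{L}_{\mathrm{OPSD}} \ge I$ for every $\theta$, since the gap term does not depend on $\theta$. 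Equality $\mathcal{L}_{\mathrm{OPSD}}=I$ holds exactly when $Q=\bar P$, which is the irreducibility claim.
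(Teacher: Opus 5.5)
Your proposal is correct and follows the paper's proof step for step: insert $\bar{P}_T$ inside the logarithm, recognize $\mathbb{E}_r[D_{\mathrm{KL}}(P_T(\cdot\mid r)\,\|\,\bar{P}_T)]$ as the conditional mutual information, and use the fact that $\log(\bar{P}_T/P_S)$ does not depend on $r$ to collapse the remaining term to $D_{\mathrm{KL}}(\bar{P}_T\,\|\,P_S)=\mathcal{L}^*$. Your concern about $P(r\mid x)$ versus $P(r\mid x,y_{<t})$ is a point the paper passes over by suppressing the conditioning on $(x,y_{<t})$, and it resolves itself: the prefixes are sampled on-policy, so $y_{<t}\sim P_S(\cdot\mid x)$ is conditionally independent of $r$ given $x$, and the posterior along the trajectory is just $P(r\mid x)$, which means the two mixing laws coincide.
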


The proof is deferred to Appendix~\ref{app:proof-kl-decomp}. The mutual information term quantifies the extent to which the teacher's token-level prediction depends on the privileged information that is, by construction, inaccessible to the student. Critically, $I(Y_t; R \mid X, Y_{<t})$ is \textit{independent of $\theta$}: it is entirely determined by the teacher's conditional distributions and $P(r \mid x)$. The student's optimization cannot eliminate this gap. Within the feasible set $\mathcal{F} = \{Q : Q(\cdot \mid x, y_{<t}) \text{ does not condition on } r\}$, the global optimum is $P_S^* = \bar{P}_T$, at which the residual loss equals $I(Y_t; R \mid X, Y_{<t}) > 0$, a strictly positive, irreducible lower bound that grows with the informativeness of the privileged signal.

This result provides a formal explanation for the KL stagnation observed in Figure~\ref{fig:pilot}(c): the OPSD divergence plateaus because the student quickly reaches the vicinity of $\bar{P}_T$, after which the residual $I(Y_t; R \mid X, Y_{<t}) > 0$ cannot be reduced through legitimate optimization. The dashed line in Figure~\ref{fig:pilot}(c), marking the OPSD KL at step~1, visually confirms that this loss floor is effectively reached within the earliest steps. More critically, this irreducible residual actively corrupts the optimization process. Under a biased objective, the optimizer continues to receive a non-vanishing loss signal and is driven to absorb harmful noise into the model parameters. As we will explain in the next section, this residual term directly contaminates the gradient direction, steering parameter updates away from genuine reasoning improvement and toward encoding spurious correlations between the input and the privileged information. Since the student's architecture precludes direct conditioning on $r$, the only available pathway is to encode statistical correlations between $x$ and $r$ in the parameters $\theta$, effectively learning a mapping $x \to r$. This is the mathematical origin of privileged information leakage. In contrast, OPD employs an external teacher whose predictions do not condition on privileged information inaccessible to the student, so the mutual information gap does not arise, and the KL divergence decreases steadily.


\subsubsection{Gradient Structure: The Mechanism of Leakage}
\label{sec:gradient-structure}

Theorem~\ref{thm:kl-decomp} establishes that $I(Y_t; R \mid X)$ is $\theta$-independent, which may suggest that it exerts no influence on gradients. We demonstrate that, while this holds for the \textit{expected} gradient, the \textit{per-sample} gradients carry a deviation term whose variance is directly governed by this mutual information.

\noindent \textbf{Benign expected gradient.}
Since $I(Y_t; R \mid X)$ does not depend on $\theta$, we have $\nabla_\theta \mathcal{L}_{\text{OPSD}} = \nabla_\theta \mathcal{L}^* = -\sum_{v} \bar{P}_T(v) \nabla_\theta \log P_S(v)$.
At the population level, the OPSD gradient is identical to that of the ideal marginal matching objective.

\noindent \textbf{Pathological per-sample gradients.}
In practice, optimization operates on concrete samples $(x, r)$:
\begin{equation}
    g(\theta; r) = -\sum_{v \in \mathcal{V}} P_T(v \mid r) \cdot \nabla_\theta \log P_S(v).
\end{equation}

\begin{proposition}[Per-Sample Gradient Decomposition]
\label{prop:grad-decomp}
For any specific realization of $r$, the per-sample gradient admits the decomposition:
\begin{equation}
    g(\theta; r) = \underbrace{-\sum_{v} \bar{P}_T(v)\, \nabla_\theta \log P_S(v)}_{g^*(\theta):\; \textit{marginal matching}} \;+\; \underbrace{-\sum_{v} \bigl[P_T(v \mid r) - \bar{P}_T(v)\bigr]\, \nabla_\theta \log P_S(v)}_{\delta(\theta;\, r):\; r\textit{-specific deviation}},
    \label{eq:grad-decomp}
\end{equation}
satisfying: (i) $\mathbb{E}_{r}[\delta(\theta; r)] = 0$, and (ii) $\mathbb{E}_{r}[\|\delta(\theta; r)\|^2] = \sum_{v} \mathrm{Var}_{r}[P_T(v \mid r)] \cdot \|\nabla_\theta \log P_S(v)\|^2$. The deviation vanishes identically when $I(Y_t; R \mid X) = 0$ and its variance increases monotonically with the mutual information.
\end{proposition}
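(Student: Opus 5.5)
The decomposition itself is an add-and-subtract identity. Writing $P_T(v\mid r) = \bar{P}_T(v) + [P_T(v\mid r)-\bar{P}_T(v)]$ inside $g(\theta;r)$ and splitting the sum by linearity gives $g = g^* + \delta$ immediately. Throughout, I treat the context $(x, y_{<t})$ as fixed and write $s_v \triangleq \nabla_\theta \log P_S(v)$. The structural point used in every later step is that $s_v$ does not depend on $r$, since the student never conditions on the privileged information. So within $\mathbb{E}_r$ the score vectors are constants, and only the scalar weights $a_v(r) \triangleq P_T(v\mid r)-\bar{P}_T(v)$ are random.

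For (i), I pull the constant $s_v$ out of the expectation: $\mathbb{E}_r[\delta] = -\sum_v \mathbb{E}_r[a_v(r)]\, s_v$. Here $\mathbb{E}_r[a_v(r)] = 0$ by the very definition of $\bar{P}_T$ as the $r$-average of the teacher, exactly as in the marginal used for Theorem~\ref{thm:kl-decomp}.

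For (ii), I expand the square norm:
\begin{equation*}
\mathbb{E}_r\|\delta\|^2 = \sum_{v,w} \mathrm{Cov}_r\!\bigl[P_T(v\mid r),P_T(w\mid r)\bigr]\,\langle s_v, s_w\rangle .
\end{equation*}
The diagonal terms are exactly $\sum_v \mathrm{Var}_r[P_T(v\mid r)]\,\|s_v\|^2$, which is the claimed expression. I expect the main obstacle to be the off-diagonal terms. They do not vanish in general: the $a_v$ sum to zero across $v$, so they are negatively correlated. I would therefore state the identity under the additional assumption that the score vectors for distinct tokens are orthogonal, or approximately so in high dimension, so that $\langle s_v,s_w\rangle=0$ for $v\neq w$. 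I would present the full covariance form above as the exact general statement, and note that the diagonal expression is its leading term.

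For the final sentence, the natural tool is that $I(Y_t;R\mid X,Y_{<t}) = \mathbb{E}_r\, D_{\mathrm{KL}}(P_T(\cdot\mid r)\,\|\,\bar{P}_T)$, which follows from the proof of Theorem~\ref{thm:kl-decomp}.
\begin{itemize}
\item \emph{Vanishing.} If the mutual information is zero, each KL term is zero almost surely. Hence $P_T(\cdot\mid r)=\bar{P}_T$ for almost every $r$, so $a_v\equiv 0$ and $\delta\equiv 0$.
\item \emph{Growth.} Pinsker's inequality gives $\sum_v \mathrm{Var}_r[P_T(v\mid r)] = \mathbb{E}_r\|a(r)\|_2^2 \le \mathbb{E}_r\|a(r)\|_1^2 \le 2\, I(Y_t;R\mid X,Y_{<t})$. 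Consequently $\mathbb{E}_r\|\delta\|^2 \le 2\,I \cdot \max_v\|s_v\|^2$ under orthogonality, and a similar bound holds without it by the triangle inequality.
\end{itemize}
I would phrase ``increases monotonically'' rigorously as this mutual-information control on the deviation variance. A literal monotone function of $I$ alone is not available, since the variance also depends on how the teacher's dependence on $r$ is spread across tokens.
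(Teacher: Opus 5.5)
Your decomposition and part (i) coincide with the paper's argument: add and subtract $\bar{P}_T$, then pull the $r$-independent score $s_v$ out of $\mathbb{E}_r$. The routes diverge at (ii) and at the final sentence.

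The paper expands $\mathbb{E}_r\|\delta\|^2$ into diagonal and cross terms as you do, and concedes that the cross terms are nonzero. It bounds them by Cauchy--Schwarz, then asserts that the displayed diagonal sum is ``exact for the diagonal contribution'' and a lower bound for the full quantity. For the last sentence it only checks that $I=0$ forces $\delta\equiv 0$, and it offers no argument for monotonicity. This gives the paper a clean formula tying the deviation to $\mathrm{Var}_r[P_T(v\mid r)]$, but the lower-bound step does not hold. Cauchy--Schwarz only yields the upper bound $\bigl(\sum_v \sqrt{\mathrm{Var}_r[P_T(v\mid r)]}\,\|s_v\|\bigr)^2$. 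A counterexample: take the logits as parameters, so $s_v=e_v-P_S$. Then $\sum_v a_v=0$ gives $\delta=-a$, hence $\mathbb{E}_r\|\delta\|^2=\sum_v\mathrm{Var}_r[P_T(v\mid r)]$. The paper's expression instead carries factors $\|e_v-P_S\|^2=1-2P_S(v)+\|P_S\|^2\approx 2$ on low-probability tokens of a peaked student, so it can overestimate by nearly a factor of two and is neither exact nor a lower bound.

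Your route is therefore the correct one, at the price of a weaker but true statement. It keeps the full covariance form as the exact statement, obtains the displayed identity only under an extra hypothesis, and replaces ``monotonically'' with the Pinsker control. Your remark that no literal monotone dependence on $I$ exists is also right.

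Three refinements:
\begin{enumerate}
\item Your justification ``approximately orthogonal in high dimension'' is weak. In the same example, $\langle e_v-P_S,e_w-P_S\rangle=\|P_S\|^2-P_S(v)-P_S(w)\approx 1$ for two low-probability tokens, because all scores share the normalizer gradient. It is better to use $\sum_v a_v(r)=0$ to write $\delta=-\sum_v a_v(r)\,(s_v-c)$ for any $r$-independent $c$, and to impose orthogonality on the recentered scores. In logit coordinates, $c=-P_S$ makes them exactly orthonormal and gives the identity with unit weights.
\item Your triangle-inequality bound $\|\delta\|\le\max_v\|s_v\|\,\|a(r)\|_1$ already yields $\mathbb{E}_r\|\delta\|^2\le 2I\max_v\|s_v\|^2$ with no orthogonality at all. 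That hypothesis is needed only for the exact identity.
\item Minor: $\sum_v a_v=0$ makes the covariances negative only in aggregate, $\sum_{w\ne v}\mathrm{Cov}_r[a_v,a_w]=-\mathrm{Var}_r[a_v]$, not pairwise.
\end{enumerate}
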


The proof is provided in Appendix~\ref{app:proof-grad-decomp}. Property~(i) may suggest that the deviation is innocuous on average; however, any optimizer that computes gradients on individual samples or mini-batches, such as SGD and Adam \citep{kingma2017adammethodstochasticoptimization}, is inherently path-dependent. The zero-mean perturbations in nonlinear optimization do not necessarily cancel over the course of training.

\noindent \textbf{Two-phase training dynamics.}
The decomposition in Proposition~\ref{prop:grad-decomp} partitions the per-sample gradient into a beneficial component $g^*$ and a deviation component $\delta$. Their relative magnitudes evolve over training and give rise to two distinct regimes that precisely correspond to the empirical phenomena reported in \S\ref{sec:observations}.

Early in training, the student $P_S$ is far from the teacher's marginal $\bar{P}_T$, so the beneficial component dominates: $\|g^*(\theta)\| \gg \|\delta(\theta; r)\|$. In this regime, the gradient predominantly drives marginal matching, and the student rapidly acquires general reasoning capabilities. This corresponds to the steep rise in validation accuracy observed during the first 10\textendash 20 steps in Figure~\ref{fig:pilot}(b). As training progresses and $P_S$ approaches $\bar{P}_T$, the beneficial component $\|g^*(\theta)\|$ diminishes toward zero. The deviation component $\|\delta(\theta; r)\|$, however, remains bounded away from zero: its variance is governed by $I(Y_t; R \mid X)$, which is independent of $\theta$ and therefore does not decay with optimization progress. Parameter updates thus become increasingly dominated by $\delta$, and the path-dependent accumulation of these perturbations drives the model toward regions of parameter space that encode $x \to r$ correlations, triggering a self-reinforcing degradation. This transition marks the onset of the performance decline in Figure~\ref{fig:pilot}(b) and the monotonically increasing leakage counts in Figure~\ref{fig:pilot}(a).

\noindent \textbf{Leakage bandwidth: controlled experiments.}
The gradient decomposition not only explains the failure of standard OPSD, but also makes a precise prediction: \textit{any} variant in which the teacher's privileged evaluation $P_T(\cdot \mid r)$ enters the gradient direction will suffer from leakage, regardless of how the distillation target is compressed. To test this prediction, we design two ablated variants alongside standard OPSD: (i) \textit{Teacher's Top-1}, which retains only the teacher's most probable token $\arg\max_v P_T(v \mid r)$ as the target, and (ii) \textit{Student's Top-1}, which restricts the target support to the student's most probable token $\arg\max_v P_S(v)$. We report leakage counts and validation accuracy for all three variants over 100 training steps in Figure~\ref{fig:pilot}(a,b).

All three variants confirm the prediction: leakage increases in every case. The gradient framework explains both the universality and the ordering of severity through the concept of \textit{leakage bandwidth}, which we define as the effective number of token positions at which $r$-specific information enters the gradient direction. Full OPSD operates over the entire vocabulary $\mathcal{V}$: the teacher's privileged preference $P_T(v \mid r)$ weights the gradient contribution of every token, yielding the widest bandwidth. Teacher's Top-1 collapses the target to a single token $\arg\max_v P_T(v \mid r)$ that is entirely determined by $r$, narrowing the bandwidth but producing the most concentrated injection of privileged information, which explains why it exhibits the most severe leakage. Student's Top-1 restricts the target support to $\arg\max_v P_S(v)$, yielding the narrowest bandwidth; yet the gradient weight $P_T(v_S^* \mid r)/P_S(v_S^*)$ at the selected token remains a function of $r$, so leakage persists, albeit at the lowest rate. In all three cases, the directional dependence of the gradient on $r$ is irreducible (proof in Appendix~\ref{app:proof-unified-pilot}), explaining the universal occurrence of leakage observed in Figure~\ref{fig:pilot}(a). The performance degradation in Figure~\ref{fig:pilot}(b) follows directly: stronger leakage accelerates the transition from the first training phase to the second and produces more rapid decline.

We further analyze the implications of shared-parameter coupling in Appendix~\ref{app:trilemma}, where we show that the interaction between gradient-level leakage and teacher drift under shared parameters gives rise to an impossibility trilemma: objective stability, sustained improvement, and leakage-free training cannot hold simultaneously under any parameter management strategy.
 

\section{RLSD: Self-Distillation as RLVR's Wingman}
\label{sec:method}
 
The preceding analysis identifies a precise root cause: distribution matching fails because privileged information enters the \textit{gradient direction}, contaminating the optimization trajectory. Yet the same quantity at the heart of this failure, the evidence ratio $P_T(y_t)/P_S(y_t)$, also carries a useful signal: it measures how much the privileged information revises the model's belief about each token. The challenge, then, is not to discard this signal, but to change \textit{how} it is used. We propose RLVR with Self-Distillation (RLSD), which repurposes the teacher's role entirely: instead of serving as a generative target for distribution matching, the discrepancy between $P_T$ and $P_S$ is used as a token-level credit assignment signal within the policy gradient framework. \textbf{The privileged information influences only how much credit each token receives, not which tokens are reinforced or penalized, nor the direction of parameter updates}.

\subsection{From Distribution Matching to Credit Assignment}
\label{sec:rlsd}
 
\noindent \textbf{Step 1: Privileged information gain.}
Given a student-sampled trajectory $y = (y_1, \dots, y_T)$, we compute the log-probability of each token under both the student context ($x$ only) and the teacher context ($x$ and $r$), and define the \textit{privileged information gain} at each position:
\begin{equation}
    \Delta_t = \texttt{sg}\!\left(\log P_T(y_t) - \log P_S(y_t)\right),
    \label{eq:delta}
\end{equation}
where $\texttt{sg}$ denotes the stop-gradient operator. Since teacher and student share the same model, $\Delta_t$ isolates the \textit{marginal contribution of the privileged information} $r$ to the prediction of $y_t$. A large positive $\Delta_t$ indicates that $r$ strongly supports this token; a negative value indicates that $r$ disfavors it. Crucially, $\Delta_t$ provides a dense, token-level signal that naturally reflects how much each token is informed by the privileged information, making it a principled and lightweight basis for fine-grained credit assignment within a trajectory. The stop-gradient ensures that $\Delta_t$ serves purely as a weighting signal and does not introduce auxiliary gradient pathways.

\begin{figure*}[t]
  \centerline{\includegraphics[scale=0.44]{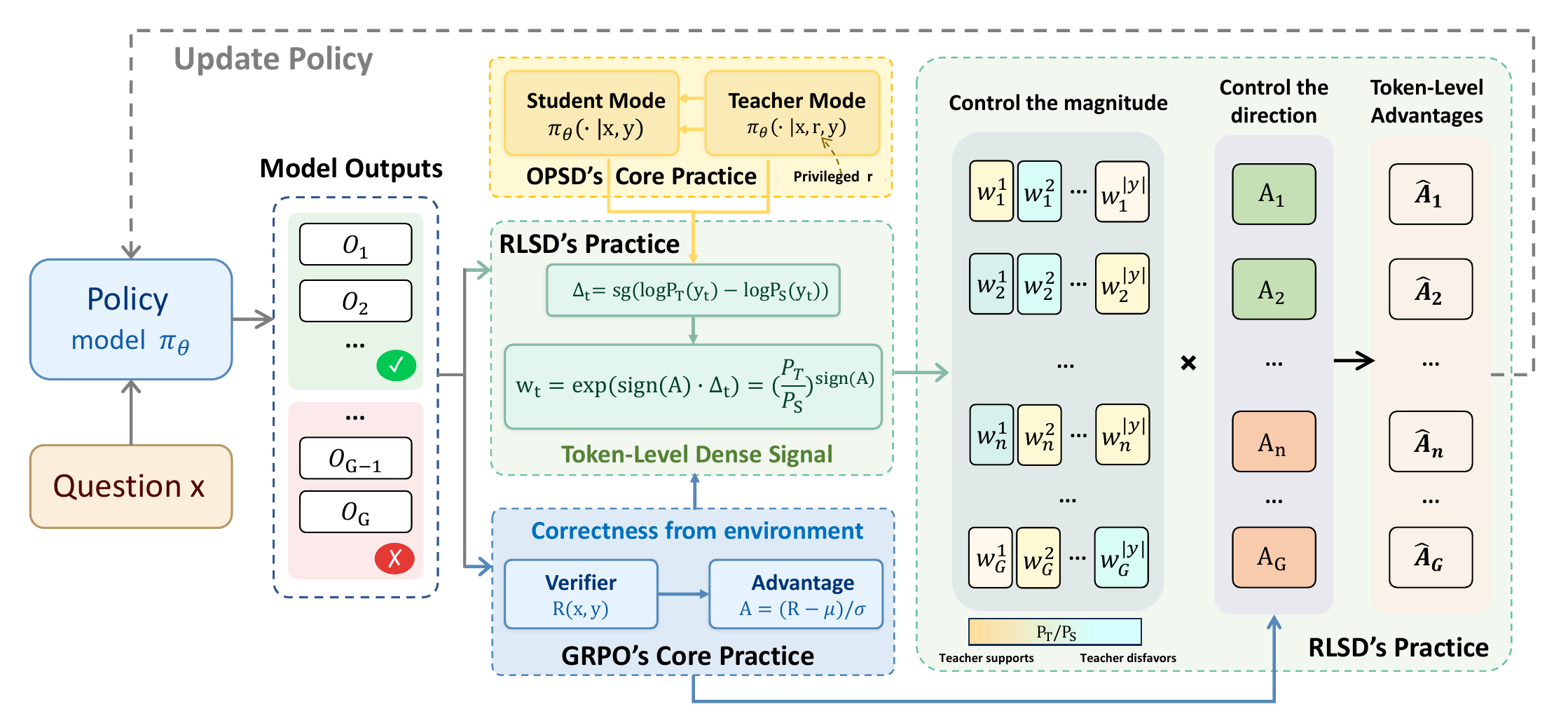}}
  \vspace{-0.1cm}
  \caption{An overview of our RLSD method. }
  \label{fig:method}
\end{figure*} 
 
\noindent \textbf{Step 2: Direction-aware evidence reweighting.}
We construct a per-token weight from the privileged information gain, modulated by the sign of the sequence-level advantage:
\begin{equation}
    w_t = \exp\!\left(\mathrm{sign}(A) \cdot \Delta_t\right) = \left(\frac{P_T(y_t)}{P_S(y_t)}\right)^{\mathrm{sign}(A)}.
    \label{eq:weight}
\end{equation}
 
This formulation admits a natural Bayesian interpretation. $P_S(y_t)$ represents the model's \textit{prior} assessment of token $y_t$, based solely on the question $x$. $P_T(y_t)$ represents the \textit{posterior} assessment after observing the privileged information $r$. The ratio $P_T(y_t)/P_S(y_t)$ is therefore an \textit{evidence ratio}: the factor by which the privileged information revises the model's belief about each token. Under mild modeling assumptions, this ratio can be shown to equal the Bayesian belief update $P(r \mid x, y_{\leq t}) / P(r \mid x, y_{<t})$, i.e., the degree to which generating $y_t$ increases the posterior probability that the privileged information $r$ is consistent with the trajectory (Theorem~\ref{thm:bayesian} in Appendix~\ref{app:bayesian}).
 
The $\mathrm{sign}(A)$ exponent implements direction-aware credit assignment. When $A > 0$, $w_t = P_T/P_S$: tokens that the privileged information supports receive larger weights, concentrating positive credit on the tokens most aligned with the correct reasoning trace. When $A < 0$, $w_t = P_S/P_T$: the ratio is inverted, so tokens that the privileged information \textit{disfavors} bear greater blame, while tokens it supports receive attenuated punishment. Since $\exp(\cdot) > 0$ for all inputs, the weight is strictly positive, guaranteeing that the sign of the token-level advantage is never flipped by the reweighting. The environment reward retains exclusive authority over whether a trajectory is reinforced or penalized; the teacher only modulates the relative magnitude across tokens within a trajectory.
 
This design parallels the importance sampling ratio $\pi_\theta / \pi_{\mathrm{old}}$ used in GRPO for policy updates. GRPO uses the ratio between current and old policies to control the \textit{step size} of updates; RLSD uses the ratio between posterior and prior to control the \textit{credit distribution} across tokens. Both are importance ratios operating within the same policy gradient framework, yielding a structurally unified formulation.

\noindent \textbf{Step 3: Clipped credit assignment.}
Following the design philosophy of the clipped surrogate objective in PPO~\citep{schulman2017proximalpolicyoptimizationalgorithmsppo} and GRPO, we clip the evidence weights to bound the maximum influence of any single token:
\begin{equation}
    \hat{A}_t = A \cdot \mathrm{clip}\!\left(w_t,\; 1 - \epsilon_w,\; 1 + \epsilon_w\right),
    \label{eq:final}
\end{equation}
where $\epsilon_w$ bounds the per-token credit deviation. The clipping in Eq.~\eqref{eq:final} serves an analogous role to the importance ratio clipping in GRPO: while GRPO clips the policy update step size, RLSD clips the credit redistribution magnitude. Both mechanisms act as trust-region constraints that stabilize training.  In practice, to avoid an abrupt transition at the start of training, we linearly interpolate between the uniform advantage and the reweighted advantage over the training steps using $\lambda \in [0, 1]$, gradually shifting to the uniform advantage. The final objective of RLSD is as follows:
\begin{equation}
    \mathcal{L}_{\text{RLSD}}(\theta) = \mathbb{E}\left\{\frac{1}{G}\sum_{i=1}^{G}\frac{1}{|y^{(i)}|}\sum_{t=1}^{|y^{(i)}|} \min\! \left[ w_tA^{(i)} ,\; \text{clip}\!\left(w_t, 1-\epsilon_w, 1+\epsilon_w\right)A^{(i)} \right]\right\},
    \label{eq:rlsd}
\end{equation}

\subsection{Integration with GRPO}
\label{sec:integration}
 
The modified advantage $\hat{A}_t$ is a drop-in replacement for the uniform advantage in the standard GRPO objective. The complete training procedure is summarized in Algorithm~\ref{alg:rlsd}.

No auxiliary distillation loss is introduced; the only modification to the standard GRPO pipeline is the internal redistribution of credit within each trajectory. The additional computational cost amounts to one forward pass per response to obtain the teacher logits, which is negligible relative to the rollout generation that dominates wall-clock time.

\begin{algorithm}[t]
\caption{RLSD: Reinforcement Learning with Self-Distillation}
\label{alg:rlsd}
\begin{algorithmic}[1]
\Require Policy $\pi_\theta$, dataset $\mathcal{S} = \{(x_i, r_i)\}_{i=1}^N$, verifier $R(\cdot, \cdot)$, group size $G$, mixing coefficient $\lambda$, clip bounds $\epsilon_w$
\For{each training iteration}
    \State Sample a batch of questions $\{x\}$ from $\mathcal{S}$
    \For{each question $x$ with privileged information $r$}
        \State \textcolor{gray}{\textit{// Step 1: On-policy rollout}}
        \State Sample $G$ responses $\{y^{(1)}, \dots, y^{(G)}\} \sim \pi_\theta(\cdot \mid x)$
        \State \textcolor{gray}{\textit{// Step 2: Sequence-level advantage from environment}}
        \For{$i = 1, \dots, G$}
            \State Obtain reward $R(x, y^{(i)}) \in \{0, 1\}$ from the verifier
        \EndFor
        \State Compute $A^{(i)} = \frac{R(x, y^{(i)}) - \mu_G}{\sigma_G}$ \Comment{Group-relative advantage}
        \State \textcolor{gray}{\textit{// Step 3: Token-level credit assignment via self-distillation}}
        \For{$i = 1, \dots, G$}
            \State Compute teacher logits via forward pass with $(x, r, y^{(i)})$ \Comment{Single extra forward pass}
            \For{$t = 1, \dots, |y^{(i)}|$}
                \State $\Delta_t \gets \texttt{sg}\!\left(\log \pi_\theta(y_t^{(i)} \mid x, r, y_{<t}^{(i)}) - \log \pi_\theta(y_t^{(i)} \mid x, y_{<t}^{(i)})\right)$ 
                \State $w_t \gets \exp\!\left(\text{sign}(A^{(i)}) \cdot \Delta_t\right)$ 
                \State $\hat{A}_t^{(i)} \gets A^{(i)} \cdot \left((1 - \lambda) + \lambda \cdot \text{clip}(w_t,\, 1 - \epsilon_w,\, 1 + \epsilon_w)\right)$ 
            \EndFor
        \EndFor
    \EndFor
    \State \textcolor{gray}{\textit{// Step 4: Policy update}}
    \State Update $\theta$ by maximizing $\mathcal{L}_{\text{RLSD}}(\theta) = \frac{1}{G} \sum_{i=1}^{G} \frac{1}{|y^{(i)}|} \sum_{t=1}^{|y^{(i)}|} \hat{A}_t^{(i)}$
\EndFor
\end{algorithmic}
\end{algorithm}

\subsection{A Unified Token-Level Advantage Perspective}
\label{sec:unified}
 
To situate RLSD within the broader landscape, we observe that GRPO, on-policy self-distillation, and RLSD can all be expressed as instances of a single policy gradient template:
\begin{equation}
    \Delta\theta \;\propto\; \mathbb{E}_{y \sim P_S(\cdot|x)}\!\left[\sum_{t=1}^{|y|} \hat{A}_t \;\nabla_\theta \log P_S(y_t \mid x, y_{<t})\right],
    \label{eq:unified}
\end{equation}
where the methods differ solely in how they define the token-level advantage $\hat{A}_t$. 
 
\textit{GRPO} assigns a uniform advantage across all tokens: $\hat{A}_t = A$, where $A$ is the sequence-level advantage from the verifier. This ensures that the optimization direction is fully grounded in the environment reward, but provides no token-level discrimination: every token within a trajectory receives identical credit regardless of its contribution to the final answer.
 
\textit{On-policy self-distillation} methods replace the environment reward with a dense teacher signal. The specific form of $\hat{A}_t$ depends on the divergence measure: minimizing the reverse KL $D_{\mathrm{KL}}(P_S \| P_T)$ via the log-derivative trick yields $\hat{A}_t = \Delta_t = \log P_T(y_t) - \log P_S(y_t)$.
The environment reward $R(x,y)$ is entirely absent from $\hat{A}_t$: even when a trajectory produces a wrong answer ($A < 0$), tokens favored by the teacher ($\Delta_t > 0$) still receive positive advantage, decoupling the optimization direction from the verifiable correctness signal.
 
\textit{RLSD} resolves this tension by combining both sources of information. Its advantage (Eq.~\ref{eq:final}) uses the environment reward to determine the \textit{direction} (sign) of each token's update, while using the teacher's privileged assessment to determine the \textit{magnitude} (relative credit) within a trajectory. We provide a formal proof that these properties render RLSD structurally immune to privileged information leakage in Appendix~\ref{app:leakage-free}, and verify that RLSD simultaneously satisfies all three desiderata of the impossibility trilemma (Appendix~\ref{app:trilemma}).
 

\section{Experiment}
\label{sec:exp}

\subsection{Experimental Setup}

\noindent \textbf{Training Data and Benchmarks.}
We train our models on MMFineReason-123K~\citep{lin2026mmfinereasonclosingmultimodalreasoning}, a challenging subset derived from the MMFineReason-1.8M corpus via difficulty-based filtering. Specifically, inference is performed on each training sample using Qwen3-VL-4B-Thinking with four independent rollouts, and only samples where the model fails on all attempts are retained. This conservative criterion discards trivial examples and concentrates training signal on challenging problems, yielding faster convergence and more efficient use of computational resources.

We evaluate on five multimodal reasoning benchmarks spanning diverse mathematical and general reasoning capabilities. \textbf{MMMU}~\citep{mmmu} is a massive multi-discipline benchmark covering college-level subjects across science, engineering, and humanities, requiring both perception and domain knowledge. \textbf{MathVista}~\citep{mathvista} assesses mathematical reasoning grounded in visual contexts. \textbf{MathVision}~\citep{mathvision} extends mathematical reasoning evaluation to more complex and competition-level visual problems. \textbf{ZeroBench}~\citep{zerobench} is a challenging benchmark specifically designed to be unsolvable by current frontier models, providing a stress test for reasoning robustness. \textbf{WeMath}~\citep{wemath} evaluates fine-grained mathematical problem-solving abilities with structured difficulty levels. Together, these benchmarks provide a comprehensive and rigorous assessment of both the breadth and depth of multimodal reasoning performance. For all benchmarks, we report accuracy (\%) as the evaluation metric.

\noindent \textbf{Models and Baselines.}
We conduct experiments on \textbf{Qwen3-VL-8B-Instruct}~\citep{qwen3vl} as our base model. We compare RLSD against the following baselines. \textbf{GRPO}~\citep{grpo} is a standard RLVR method that estimates token-level advantages from sparse scalar outcome rewards at the sequence level, serving as our primary RL baseline. \textbf{OPSD}~\citep{zhao2026opsd} is an on-policy self-distillation method where a single model acts as both teacher and student, with the teacher conditioned on privileged information (e.g., verified reasoning traces) to provide dense per-token supervision. \textbf{SDPO}~\citep{hbotter2026reinforcementlearningselfdistillation_sdpo} extends self-distillation to the reinforcement learning with rich feedback setting, treating the current model conditioned on environment feedback as a self-teacher to derive dense credit assignment signals. \textbf{GRPO+OPSD} is a straightforward combination baseline inspired by the MOPD approach proposed in MIMO-v2-Flash~\citep{mimo2025flash}, which jointly optimizes reinforcement learning and distillation objectives. Specifically, we combine the GRPO loss with a distillation KL divergence loss via linear interpolation with a tuned weighting coefficient. Unlike MOPD, which relies on a separately trained expert model for distillation, we adopt the OPSD formulation by using the model's own rollout outputs as the distillation target, thereby eliminating the need for an external expert while still leveraging dense self-distillation signals alongside sparse environment rewards. We also include the \textbf{Base LLM} (i.e., Qwen3-VL-8B-Instruct without any post-training) as a reference to quantify the overall gains from post-training.


\noindent \textbf{Implementation Details.}
We implement our method based on the VERL~\citep{verl} and EasyR1~\citep{easyr1} frameworks. 
During training, the model's maximum context
length is set to 8192, with a maximum prompt length of 4096 and a maximum response length of
4096.
The same prompt and response length settings are used consistently during both training and evaluation. For GRPO, GRPO+OPSD, and RLSD, the learning rate is fixed at $1 \times 10^{-6}$; for OPSD and SDPO, the learning rate is set to $1 \times 10^{-5}$ following their original implementations. The training batch size is set to 256, and for each prompt, we sample 8 rollouts with a sampling temperature of 1.0. For all GRPO-based methods, we use clipping thresholds of $\epsilon_{\text{low}} = 0.2$ and $\epsilon_{\text{high}} = 0.28$, and omit both the KL penalty loss and entropy regularization loss from the objective function. The hyperparameter $\lambda$ in our RLSD is initialized at $0.5$ and linearly decayed to 0 over the first 50 training steps, and $\epsilon_w$ is set to $0.2$. To maintain a stable self-distillation signal, the teacher model parameters are synchronized with the student model every 10 training steps and kept frozen in between. Regarding the privileged information required by different methods: OPSD requires verified reasoning traces, for which we use reasoning trajectories distilled from Qwen3-VL-235B-A22B-Thinking and verified as correct within the MMFineReason-123K dataset; SDPO follows its original setting in the paper, using a successful previous rollout as privileged context; RLSD requires \textbf{only the final ground-truth answer} without any reasoning trace, making it the least demanding in terms of privileged information. All experiments are conducted on 4 compute nodes, each equipped with 8 NVIDIA H200 140GB GPUs.

\subsection{Experimental Results}

\subsubsection{Main Results}
\label{sec:main-results}

\begin{table*}[t]
    \caption{%
        Multimodal reasoning results on the Qwen3-VL-8B-Instruct model.
        \textbf{Bold} indicates the best result among all models.
        Avg.\ is computed over the five selected benchmarks.
    }
    \label{tab:multimodel}
    \centering
    \begin{tabular}{lccccc|c}
        \toprule
        Method
            & MMMU & MathVista & MathVision & ZeroBench
            & Wemath
            & Avg. \\
        \midrule
        Base LLM
            & 62.44 & 73.80  
            & 47.37 & 19.76 & 54.10
            & 51.49 \\
        GRPO
            & 65.11 & 76.20
            & 48.82 & 22.60 & 56.57
            & 53.86 \\
        OPSD
            & 63.82 & 75.10
            & 47.53 & 21.06 & 54.95
            & 52.49 \\
        SDPO
            & 65.11 & 74.00 
            & 47.27 & \textbf{25.15} & 52.19 
            & 52.74 \\
        GRPO+OPSD
            & 63.22 & 75.90
            & 48.52 & 22.16 & 54.76
            & 52.91 \\
            \rowcolor[rgb]{0.87,0.94,1}
        RLSD \textit{(Ours)}
            & \textbf{67.22} & \textbf{78.10} & \textbf{52.73} & {24.85} & \textbf{58.00}
            & \textbf{56.18} \\
        \bottomrule
    \end{tabular}
\end{table*}
\begin{figure*}[t]
  \centerline{\includegraphics[scale=0.45]{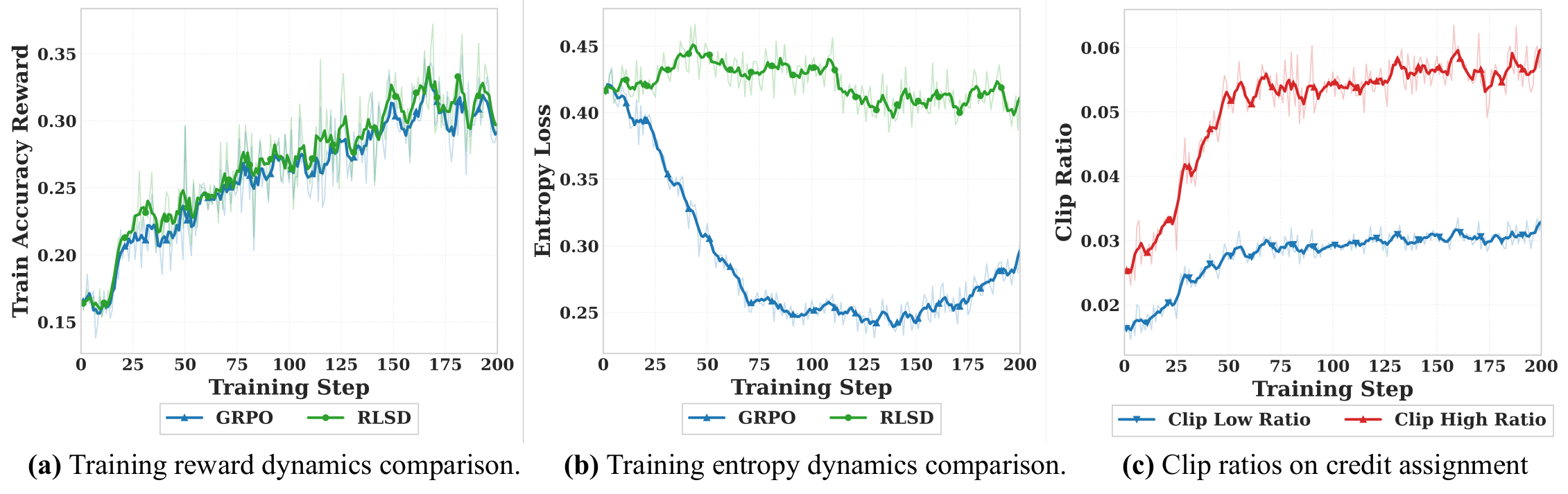}}
  \vspace{-0.1cm}
  \caption{Training dynamics on the multimodel reasoning tasks. }
  \label{fig-dynamics}
\end{figure*} 

Table~\ref{tab:multimodel} presents the evaluation results across five multimodal reasoning benchmarks. RLSD achieves the highest average accuracy, outperforming the Base LLM by 4.69\% and GRPO by 2.32\% on average under the 4K setting. Compared to GRPO, RLSD benefits from dense token-level credit assignment, yielding notable gains on challenging mathematical datasets such as MathVista (+1.9\%) and MathVision (+3.91\%), where fine-grained discrimination of reasoning steps matters most. RLSD also consistently outperforms the self-distillation baselines (OPSD and SDPO) in average accuracy. This empirical gap is consistent with our theoretical diagnosis in Section~\ref{sec:ill-posed}: while OPSD suffers from privileged information leakage due to distribution matching under information asymmetry, RLSD extracts the dense teacher signal while anchoring the update direction to the verifier reward, thereby producing more robust reasoning improvements. Moreover, RLSD also outperforms the additive fusion methods GRPO+OPSD by a margin of 3.27 points. Linearly combining bounded rewards with unbounded, high-variance kl losses causes severe scale mismatch, forcing a suboptimal trade-off that often destabilizes training. In contrast, RLSD effectively circumvents these pitfalls through multiplicative modulation. By exponentiating the teacher's signal into a dynamically bounded relative multiplier, RLSD mathematically guarantees strict sign preservation.

\subsubsection{Training Dynamics}
\label{sec:training-dynamics}

Figure~\ref{fig-dynamics} illustrates the training dynamics over 200 optimization steps. As shown in Figure~\ref{fig-dynamics}(a), RLSD demonstrates a steeper initial ascent and converges to a higher accuracy reward ceiling compared to standard GRPO, while avoiding the late-stage performance collapse observed in OPSD. Figure~\ref{fig-dynamics}(b) reveals that GRPO suffers from rapid entropy collapse due to its uniform sequence-level reward, whereas RLSD maintains a consistently higher entropy level by selectively strengthening critical reasoning tokens without uniformly suppressing alternatives at every position. Finally, Figure~\ref{fig-dynamics}(c) confirms that the clipped credit assignment mechanism is actively engaged, with clip ratios stabilizing around 3\%--6\%, successfully bounding the teacher's per-token influence and acting as a trust-region constraint analogous to the importance ratio clipping in PPO/GRPO.

\subsubsection{Case Study}
\label{sec:case-study}


\begin{figure*}[t]
  \centerline{\includegraphics[scale=0.5]{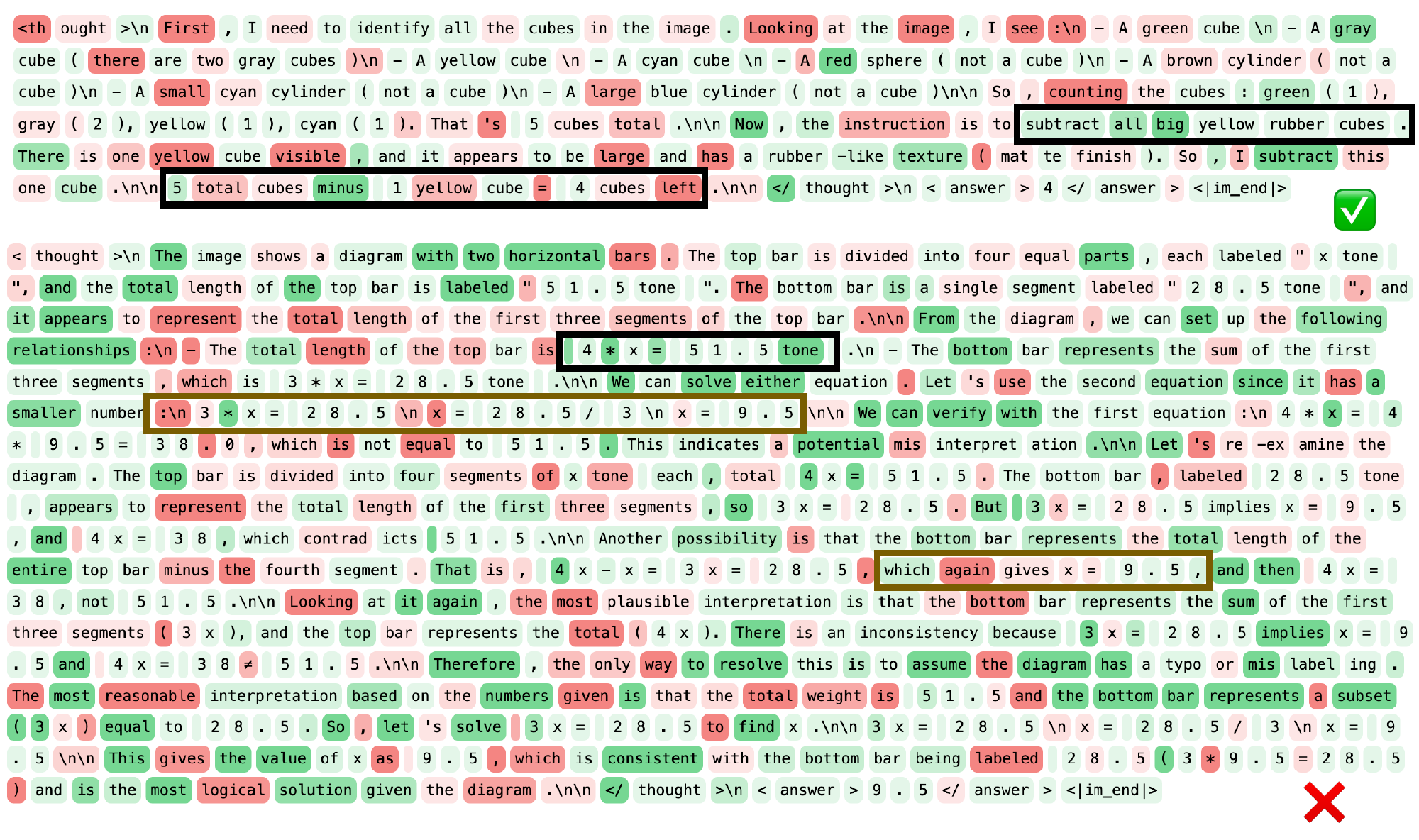}}
  \vspace{-0.1cm}
  \caption{Token-level credit heatmaps under RLSD for two multimodal reasoning examples. Top: in a correct trajectory, larger credit is concentrated on the decisive counting and subtraction steps. Bottom: in an incorrect trajectory, the strongest blame is assigned to the misread bar-model relation and the derived wrong answer. Green denotes larger token-level credit magnitude and red denotes smaller token-level credit magnitude.}
  \label{fig:case-study-exp}
\end{figure*}

Figure~\ref{fig:case-study-exp} provides a qualitative view of how RLSD redistributes sequence-level reward to the token level. In the correct cube-counting example, RLSD assigns larger credit to the tokens that actually determine correctness, such as identifying the relevant yellow cube and executing the final subtraction, while down-weighting generic narration like ``Looking at the image, I see \ldots''. In contrast, in the incorrect bar-model example, RLSD concentrates the strongest blame on the misread relation ``3x = 28.5'' and the derived wrong answer ``x = 9.5'', whereas neutral setup tokens receive comparatively smaller penalties.

This behavior is consistent with the design goal of RLSD. The environment reward still determines whether a trajectory is reinforced or penalized, but the privileged teacher modulates the relative magnitude of token-level credit according to each token's contribution to the final outcome. The resulting update pattern is neither uniform like GRPO nor distribution matching like OPSD; instead, it acts as targeted token-level credit assignment anchored to verifier-grounded correctness.

\section{Related Work}

\subsection{Credit Assignment in RLVR}
Reinforcement learning with verifiable rewards (RLVR) methods such as GRPO~\citep{grpo} have become a standard paradigm for post-training reasoning models because they optimize against automatically checkable outcomes without human preference labels~\citep{grpo,deepseekr1}. Their central limitation is that the verifier provides a sequence-level signal, so every token in a rollout inherits the same advantage regardless of whether it corresponds to a pivotal deduction or a stylistic filler. This makes long-horizon reasoning fundamentally a credit assignment problem. A common response is to introduce process reward models (PRMs) or step-level value estimators that score intermediate reasoning steps~\citep{lightman2024lets,wang2024math,yang2025testtimepromptintervention,luo2024improve,chen2024step,zhang2024generative,yang2025dynamicearlyexitreasoning,dai2025sgrpoearlyexitreinforcement,cui2025process}. Some of these methods rely on costly human step annotations, while others replace them with automated or implicit supervision; in either case, they require auxiliary reward modeling and extra computation beyond the base policy.

More recent work seeks fine-grained credit assignment while staying strictly within a verifier-only RLVR pipeline. These methods reshape token-level updates using model-internal proxies such as entropy, uncertainty, key-token statistics, attention dynamics, or outcome sensitivity~\citep{cheng2026reasoning,wangbeyond,chen2025seed,sun2025ktaemodelfreealgorithmkeytokens,xie2025unlocking,li2025attention,chen2025beyond,li2026outcome}. While they show that uniform sequence-level advantages are unnecessarily coarse, they estimate token importance entirely from intrinsic heuristics. Our method is complementary in motivation yet fundamentally different in mechanism: rather than relying on heuristic proxies, we extract rigorous token-level assessments via self-distillation under privileged context, while keeping the update direction anchored to the verifier reward.

\subsection{On-Policy Distillation}
Distillation has also been explored as an alternative source of dense token-level supervision for reasoning models. In on-policy distillation (OPD), a stronger teacher evaluates the student's own rollouts and provides token-level targets along on-policy trajectories~\citep{agarwal2024gkd,lu2025onpolicy,fu2026revisitingonpolicydistillationempirical}; recent reports suggest that strong teacher-guided distillation can even rival or complement RL-based post-training~\citep{mimo2025flash}. However, standard OPD requires maintaining an external, typically larger teacher throughout training. To eliminate this overhead, recent self-distillation methods let the same model act as both student and teacher, with the teacher conditioned on privileged information such as reference solutions or verifier feedback~\citep{zhao2026opsd,hbotter2026reinforcementlearningselfdistillation_sdpo}. Closely related on-policy distillation variants have also been explored for continual learning from demonstrations, context internalization, and reasoning compression, where the teacher is conditioned on demonstrations, auxiliary context, or conciseness instructions~\citep{shenfeld2026selfdistillation,ye2026policy,sang2026policy,penaloza2026privileged}. A recent concurrent work, TRRD \citep{zhang2026reinforcementawareknowledgedistillationllm}, also identifies the conflict between additive KL penalties and reward maximization, proposing to inject teacher probabilities into the policy importance ratio. However, TRRD operates on the trust region anchor while RLSD directly modulates the advantage magnitude via a Bayesian evidence ratio.

The shared design choice across these methods is to retain a distribution-matching objective between a privileged or context-conditioned teacher and a less-informed student. Our work starts from a similar setting but departs from that paradigm. We show that when the teacher's prediction depends on latent information unavailable to the student, matching the teacher distribution becomes structurally ill-posed and can induce privileged information leakage and unstable long-run training. RLSD therefore does not use the teacher as a generative target; instead, it repurposes the privileged discrepancy only as a scalar multiplier for credit assignment, decoupling dense token-level magnitude from the optimization direction.

\section{Conclusion}


In this work, we identified the fundamental limitation of on-policy self-distillation (OPSD): the information asymmetry between teacher and student renders the distribution-matching objective structurally ill-posed, inducing an irreducible mutual information gap that drives privileged information leakage and progressive performance degradation. We formalized this deficiency at both the objective level and the gradient level, explaining the empirically observed pattern of early gains followed by systematic collapse.
Based on this analysis, we proposed RLSD, a new training paradigm that moves beyond distribution matching between the prior and posterior: instead of forcing the student to imitate the teacher's conditional distribution, RLSD repurposes the discrepancy between them to provide fine-grained control over per-token update magnitudes, while anchoring the update directions to the environment reward. This design simultaneously inherits the rich, token-level supervision of OPSD and the reliable environmental grounding of RLVR, unifying the strengths of both paradigms within a single framework.

\section{Limitations and Future Work}
This paper focuses primarily on the theoretical analysis of OPSD's structural limitations and the motivation and validation of the RLSD paradigm. To enable faster release, this version provides limited experiments, validating around multimodal reasoning scenarios. However, we have preliminarily validated RLSD across a broader range of settings, including pure text reasoning, video understanding, and additional model families beyond the Qwen series, and observed consistent gains. We will include these in the forthcoming version.

\bibliographystyle{unsrtnat}
\bibliography{neurips_2026}

\begin{thebibliography}{42}
\providecommand{\natexlab}[1]{#1}
\providecommand{\url}[1]{\texttt{#1}}
\expandafter\ifx\csname urlstyle\endcsname\relax
  \providecommand{\doi}[1]{doi: #1}\else
  \providecommand{\doi}{doi: \begingroup \urlstyle{rm}\Url}\fi

\bibitem[Shao et~al.(2024)Shao, Wang, Zhu, Xu, Song, Bi, Zhang, Zhang, Li, Wu, and Guo]{grpo}
Zhihong Shao, Peiyi Wang, Qihao Zhu, Runxin Xu, Junxiao Song, Xiao Bi, Haowei Zhang, Mingchuan Zhang, Y.~K. Li, Y.~Wu, and Daya Guo.
\newblock Deepseekmath: Pushing the limits of mathematical reasoning in open language models, 2024.
\newblock URL \url{https://arxiv.org/abs/2402.03300}.

\bibitem[DeepSeek-AI et~al.(2025)DeepSeek-AI, Guo, Yang, Zhang, Song, Zhang, Xu, Zhu, Ma, Wang, Bi, Zhang, Yu, Wu, Wu, Gou, Shao, Li, Gao, Liu, Xue, Wang, Wu, Feng, Lu, Zhao, Deng, Zhang, Ruan, Dai, Chen, Ji, Li, Lin, Dai, Luo, Hao, Chen, Li, Zhang, Bao, Xu, Wang, Ding, Xin, Gao, Qu, Li, Guo, Li, Wang, Chen, Yuan, Qiu, Li, Cai, Ni, Liang, Chen, Dong, Hu, Gao, Guan, Huang, Yu, Wang, Zhang, Zhao, Wang, Zhang, Xu, Xia, Zhang, Zhang, Tang, Li, Wang, Li, Tian, Huang, Zhang, Wang, Chen, Du, Ge, Zhang, Pan, Wang, Chen, Jin, Chen, Lu, Zhou, Chen, Ye, Wang, Yu, Zhou, Pan, Li, Zhou, Wu, Ye, Yun, Pei, Sun, Wang, Zeng, Zhao, Liu, Liang, Gao, Yu, Zhang, Xiao, An, Liu, Wang, Chen, Nie, Cheng, Liu, Xie, Liu, Yang, Li, Su, Lin, Li, Jin, Shen, Chen, Sun, Wang, Song, Zhou, Wang, Shan, Li, Wang, Wei, Zhang, Xu, Li, Zhao, Sun, Wang, Yu, Zhang, Shi, Xiong, He, Piao, Wang, Tan, Ma, Liu, Guo, Ou, Wang, Gong, Zou, He, Xiong, Luo, You, Liu, Zhou, Zhu, Xu, Huang, Li, Zheng, Zhu, Ma, Tang, Zha, Yan, Ren, Ren, Sha, Fu, Xu, Xie, Zhang,
  Hao, Ma, Yan, Wu, Gu, Zhu, Liu, Li, Xie, Song, Pan, Huang, Xu, Zhang, and Zhang]{deepseekr1}
DeepSeek-AI, Daya Guo, Dejian Yang, Haowei Zhang, Junxiao Song, Ruoyu Zhang, Runxin Xu, Qihao Zhu, Shirong Ma, Peiyi Wang, Xiao Bi, Xiaokang Zhang, Xingkai Yu, Yu~Wu, Z.~F. Wu, Zhibin Gou, Zhihong Shao, Zhuoshu Li, Ziyi Gao, Aixin Liu, Bing Xue, Bingxuan Wang, Bochao Wu, Bei Feng, Chengda Lu, Chenggang Zhao, Chengqi Deng, Chenyu Zhang, Chong Ruan, Damai Dai, Deli Chen, Dongjie Ji, Erhang Li, Fangyun Lin, Fucong Dai, Fuli Luo, Guangbo Hao, Guanting Chen, Guowei Li, H.~Zhang, Han Bao, Hanwei Xu, Haocheng Wang, Honghui Ding, Huajian Xin, Huazuo Gao, Hui Qu, Hui Li, Jianzhong Guo, Jiashi Li, Jiawei Wang, Jingchang Chen, Jingyang Yuan, Junjie Qiu, Junlong Li, J.~L. Cai, Jiaqi Ni, Jian Liang, Jin Chen, Kai Dong, Kai Hu, Kaige Gao, Kang Guan, Kexin Huang, Kuai Yu, Lean Wang, Lecong Zhang, Liang Zhao, Litong Wang, Liyue Zhang, Lei Xu, Leyi Xia, Mingchuan Zhang, Minghua Zhang, Minghui Tang, Meng Li, Miaojun Wang, Mingming Li, Ning Tian, Panpan Huang, Peng Zhang, Qiancheng Wang, Qinyu Chen, Qiushi Du, Ruiqi Ge, Ruisong
  Zhang, Ruizhe Pan, Runji Wang, R.~J. Chen, R.~L. Jin, Ruyi Chen, Shanghao Lu, Shangyan Zhou, Shanhuang Chen, Shengfeng Ye, Shiyu Wang, Shuiping Yu, Shunfeng Zhou, Shuting Pan, S.~S. Li, Shuang Zhou, Shaoqing Wu, Shengfeng Ye, Tao Yun, Tian Pei, Tianyu Sun, T.~Wang, Wangding Zeng, Wanjia Zhao, Wen Liu, Wenfeng Liang, Wenjun Gao, Wenqin Yu, Wentao Zhang, W.~L. Xiao, Wei An, Xiaodong Liu, Xiaohan Wang, Xiaokang Chen, Xiaotao Nie, Xin Cheng, Xin Liu, Xin Xie, Xingchao Liu, Xinyu Yang, Xinyuan Li, Xuecheng Su, Xuheng Lin, X.~Q. Li, Xiangyue Jin, Xiaojin Shen, Xiaosha Chen, Xiaowen Sun, Xiaoxiang Wang, Xinnan Song, Xinyi Zhou, Xianzu Wang, Xinxia Shan, Y.~K. Li, Y.~Q. Wang, Y.~X. Wei, Yang Zhang, Yanhong Xu, Yao Li, Yao Zhao, Yaofeng Sun, Yaohui Wang, Yi~Yu, Yichao Zhang, Yifan Shi, Yiliang Xiong, Ying He, Yishi Piao, Yisong Wang, Yixuan Tan, Yiyang Ma, Yiyuan Liu, Yongqiang Guo, Yuan Ou, Yuduan Wang, Yue Gong, Yuheng Zou, Yujia He, Yunfan Xiong, Yuxiang Luo, Yuxiang You, Yuxuan Liu, Yuyang Zhou, Y.~X. Zhu,
  Yanhong Xu, Yanping Huang, Yaohui Li, Yi~Zheng, Yuchen Zhu, Yunxian Ma, Ying Tang, Yukun Zha, Yuting Yan, Z.~Z. Ren, Zehui Ren, Zhangli Sha, Zhe Fu, Zhean Xu, Zhenda Xie, Zhengyan Zhang, Zhewen Hao, Zhicheng Ma, Zhigang Yan, Zhiyu Wu, Zihui Gu, Zijia Zhu, Zijun Liu, Zilin Li, Ziwei Xie, Ziyang Song, Zizheng Pan, Zhen Huang, Zhipeng Xu, Zhongyu Zhang, and Zhen Zhang.
\newblock Deepseek-r1: Incentivizing reasoning capability in llms via reinforcement learning, 2025.
\newblock URL \url{https://arxiv.org/abs/2501.12948}.

\bibitem[Team et~al.(2026{\natexlab{a}})Team, Bai, Bai, Bao, Cai, Cao, Charles, Che, Chen, Chen, Chen, Chen, Chen, Chen, Chen, Chen, Chen, Chen, Chen, Chen, Chen, Chen, Chen, Chen, Chen, Chen, Chen, Chen, Chen, Cheng, Chu, Cui, Deng, Diao, Ding, Dong, Dong, Dong, Dong, Du, Du, Du, Du, Du, Fan, Fang, Feng, Feng, Fu, Fu, Gao, Gao, Ge, Geng, Gong, Gong, Gongque, Gu, Gu, Gu, Guan, Guo, Hao, He, He, He, Hong, Hu, Hu, Hu, Hu, Huang, Huang, Huang, Huang, Jiang, Jiang, Jin, Jing, Lai, Li, Li, Li, Li, Li, Li, Li, Li, Li, Li, Li, Li, Li, Li, Li, Li, Li, Li, Li, Li, Li, Li, Li, Liao, Lin, Lin, Lin, Lin, Liu, Liu, Liu, Liu, Liu, Liu, Liu, Liu, Liu, Liu, Liu, Liu, Liu, Liu, Liu, Liu, Liu, Liu, Lu, Lu, Lu, Luo, Luo, Luo, Ma, Ma, Mao, Mei, Men, Meng, Meng, Miao, Ni, Ouyang, Pan, Pang, Qian, Qin, Qin, Qiu, Qu, Shang, Shao, Shen, Shen, Shi, Shi, Shi, Song, Song, Song, Song, Su, Su, Su, Sui, Sun, Sun, Sun, Sung, Tai, Tang, Tang, Tang, Tang, Tao, Teng, Tian, Tian, Wang, Wang, Wang, Wang, Wang, Wang, Wang, Wang, Wang, Wang,
  Wang, Wang, Wang, Wang, Wang, Wang, Wang, Wang, Wang, Wang, Wang, Wang, Wang, Wang, Wang, Wang, Wang, Wang, Wang, Wang, Wang, Wang, Wang, Wang, Wang, Wang, Wang, Wang, Wei, Wei, Wen, Wen, Wu, Wu, Wu, Wu, Wu, Wu, Wu, Wu, Wu, Xiao, Xie, Xie, Xie, Xin, Xing, Xu, Xu, Xu, Xu, Xu, Xu, Xu, Xu, Xu, Xu, Xu, Xu, Xu, Xu, Xu, Yan, Yan, Yang, Yang, Yang, Yang, Yang, Yang, Yang, Yang, Yang, Yang, Yang, Yang, Yang, Yang, Yao, Ye, Ye, Ye, Yin, Yu, Yu, Yu, Yu, Yuan, Yuan, Yuan, Yue, Zeng, Zha, Zhan, Zhang, Zhang, Zhang, Zhang, Zhang, Zhang, Zhang, Zhang, Zhang, Zhang, Zhang, Zhang, Zhang, Zhang, Zhang, Zhang, Zhang, Zhang, Zhao, Zhao, Zhao, Zhao, Zhao, Zhao, Zhao, Zheng, Zheng, Zheng, Zheng, Zhong, Zhong, Zhong, Zhou, Zhou, Zhou, Zhou, Zhu, Zhu, Zhu, Zhu, Zhu, Zhuang, Zhuang, Zou, and Zu]{kimiteam2026kimik25visualagentic}
Kimi Team, Tongtong Bai, Yifan Bai, Yiping Bao, S.~H. Cai, Yuan Cao, Y.~Charles, H.~S. Che, Cheng Chen, Guanduo Chen, Huarong Chen, Jia Chen, Jiahao Chen, Jianlong Chen, Jun Chen, Kefan Chen, Liang Chen, Ruijue Chen, Xinhao Chen, Yanru Chen, Yanxu Chen, Yicun Chen, Yimin Chen, Yingjiang Chen, Yuankun Chen, Yujie Chen, Yutian Chen, Zhirong Chen, Ziwei Chen, Dazhi Cheng, Minghan Chu, Jialei Cui, Jiaqi Deng, Muxi Diao, Hao Ding, Mengfan Dong, Mengnan Dong, Yuxin Dong, Yuhao Dong, Angang Du, Chenzhuang Du, Dikang Du, Lingxiao Du, Yulun Du, Yu~Fan, Shengjun Fang, Qiulin Feng, Yichen Feng, Garimugai Fu, Kelin Fu, Hongcheng Gao, Tong Gao, Yuyao Ge, Shangyi Geng, Chengyang Gong, Xiaochen Gong, Zhuoma Gongque, Qizheng Gu, Xinran Gu, Yicheng Gu, Longyu Guan, Yuanying Guo, Xiaoru Hao, Weiran He, Wenyang He, Yunjia He, Chao Hong, Hao Hu, Jiaxi Hu, Yangyang Hu, Zhenxing Hu, Ke~Huang, Ruiyuan Huang, Weixiao Huang, Zhiqi Huang, Tao Jiang, Zhejun Jiang, Xinyi Jin, Yu~Jing, Guokun Lai, Aidi Li, C.~Li, Cheng Li, Fang Li,
  Guanghe Li, Guanyu Li, Haitao Li, Haoyang Li, Jia Li, Jingwei Li, Junxiong Li, Lincan Li, Mo~Li, Weihong Li, Wentao Li, Xinhang Li, Xinhao Li, Yang Li, Yanhao Li, Yiwei Li, Yuxiao Li, Zhaowei Li, Zheming Li, Weilong Liao, Jiawei Lin, Xiaohan Lin, Zhishan Lin, Zichao Lin, Cheng Liu, Chenyu Liu, Hongzhang Liu, Liang Liu, Shaowei Liu, Shudong Liu, Shuran Liu, Tianwei Liu, Tianyu Liu, Weizhou Liu, Xiangyan Liu, Yangyang Liu, Yanming Liu, Yibo Liu, Yuanxin Liu, Yue Liu, Zhengying Liu, Zhongnuo Liu, Enzhe Lu, Haoyu Lu, Zhiyuan Lu, Junyu Luo, Tongxu Luo, Yashuo Luo, Long Ma, Yingwei Ma, Shaoguang Mao, Yuan Mei, Xin Men, Fanqing Meng, Zhiyong Meng, Yibo Miao, Minqing Ni, Kun Ouyang, Siyuan Pan, Bo~Pang, Yuchao Qian, Ruoyu Qin, Zeyu Qin, Jiezhong Qiu, Bowen Qu, Zeyu Shang, Youbo Shao, Tianxiao Shen, Zhennan Shen, Juanfeng Shi, Lidong Shi, Shengyuan Shi, Feifan Song, Pengwei Song, Tianhui Song, Xiaoxi Song, Hongjin Su, Jianlin Su, Zhaochen Su, Lin Sui, Jinsong Sun, Junyao Sun, Tongyu Sun, Flood Sung, Yunpeng Tai,
  Chuning Tang, Heyi Tang, Xiaojuan Tang, Zhengyang Tang, Jiawen Tao, Shiyuan Teng, Chaoran Tian, Pengfei Tian, Ao~Wang, Bowen Wang, Chensi Wang, Chuang Wang, Congcong Wang, Dingkun Wang, Dinglu Wang, Dongliang Wang, Feng Wang, Hailong Wang, Haiming Wang, Hengzhi Wang, Huaqing Wang, Hui Wang, Jiahao Wang, Jinhong Wang, Jiuzheng Wang, Kaixin Wang, Linian Wang, Qibin Wang, Shengjie Wang, Shuyi Wang, Si~Wang, Wei Wang, Xiaochen Wang, Xinyuan Wang, Yao Wang, Yejie Wang, Yipu Wang, Yiqin Wang, Yucheng Wang, Yuzhi Wang, Zhaoji Wang, Zhaowei Wang, Zhengtao Wang, Zhexu Wang, Zihan Wang, Zizhe Wang, Chu Wei, Ming Wei, Chuan Wen, Zichen Wen, Chengjie Wu, Haoning Wu, Junyan Wu, Rucong Wu, Wenhao Wu, Yuefeng Wu, Yuhao Wu, Yuxin Wu, Zijian Wu, Chenjun Xiao, Jin Xie, Xiaotong Xie, Yuchong Xie, Yifei Xin, Bowei Xing, Boyu Xu, Jianfan Xu, Jing Xu, Jinjing Xu, L.~H. Xu, Lin Xu, Suting Xu, Weixin Xu, Xinbo Xu, Xinran Xu, Yangchuan Xu, Yichang Xu, Yuemeng Xu, Zelai Xu, Ziyao Xu, Junjie Yan, Yuzi Yan, Guangyao Yang, Hao Yang,
  Junwei Yang, Kai Yang, Ningyuan Yang, Ruihan Yang, Xiaofei Yang, Xinlong Yang, Ying Yang, Yi~Yang, Yi~Yang, Zhen Yang, Zhilin Yang, Zonghan Yang, Haotian Yao, Dan Ye, Wenjie Ye, Zhuorui Ye, Bohong Yin, Chengzhen Yu, Longhui Yu, Tao Yu, Tianxiang Yu, Enming Yuan, Mengjie Yuan, Xiaokun Yuan, Yang Yue, Weihao Zeng, Dunyuan Zha, Haobing Zhan, Dehao Zhang, Hao Zhang, Jin Zhang, Puqi Zhang, Qiao Zhang, Rui Zhang, Xiaobin Zhang, Y.~Zhang, Yadong Zhang, Yangkun Zhang, Yichi Zhang, Yizhi Zhang, Yongting Zhang, Yu~Zhang, Yushun Zhang, Yutao Zhang, Yutong Zhang, Zheng Zhang, Chenguang Zhao, Feifan Zhao, Jinxiang Zhao, Shuai Zhao, Xiangyu Zhao, Yikai Zhao, Zijia Zhao, Huabin Zheng, Ruihan Zheng, Shaojie Zheng, Tengyang Zheng, Junfeng Zhong, Longguang Zhong, Weiming Zhong, M.~Zhou, Runjie Zhou, Xinyu Zhou, Zaida Zhou, Jinguo Zhu, Liya Zhu, Xinhao Zhu, Yuxuan Zhu, Zhen Zhu, Jingze Zhuang, Weiyu Zhuang, Ying Zou, and Xinxing Zu.
\newblock Kimi k2.5: Visual agentic intelligence, 2026{\natexlab{a}}.
\newblock URL \url{https://arxiv.org/abs/2602.02276}.

\bibitem[Agarwal et~al.(2024)Agarwal, Vieillard, Zhou, Stanczyk, Garea, Geist, and Bachem]{agarwal2024gkd}
Rishabh Agarwal, Nino Vieillard, Yongchao Zhou, Piotr Stanczyk, Sabela~Ramos Garea, Matthieu Geist, and Olivier Bachem.
\newblock On-policy distillation of language models: Learning from self-generated mistakes.
\newblock In \emph{The Twelfth International Conference on Learning Representations}, 2024.
\newblock URL \url{https://openreview.net/forum?id=3zKtaqxLhW}.

\bibitem[Lu and Lab(2025)]{lu2025onpolicy}
Kevin Lu and Thinking~Machines Lab.
\newblock On-policy distillation.
\newblock \emph{Thinking Machines Lab: Connectionism}, 2025.
\newblock \doi{10.64434/tml.20251026}.
\newblock https://thinkingmachines.ai/blog/on-policy-distillation.

\bibitem[Team et~al.(2026{\natexlab{b}})Team, Xiao, Xia, Yang, Gao, Shen, Zhang, He, Lou, Luo, Wang, Xie, Zhang, Lv, Li, Chen, Xu, Zhang, Liu, Duo, Wei, Xiao, Dong, Shi, Hu, Bao, Zhou, Li, Zhao, Zhang, Li, Chen, Liu, Yu, Cao, Chen, Yu, Liu, Zhou, Su, Wang, Ma, Deng, Mao, Ye, Cai, Wang, Zhu, Ma, Chen, Li, Zhu, Xiao, Zhang, Zhang, Liu, Yang, Shi, Wang, Tian, Wu, Qu, Yi, An, Guan, Zhang, Song, Yan, Zhao, Lai, Gao, Cheng, Tian, Wang, Tang, Tang, Wen, Song, Zheng, Jiang, Wen, Sun, Li, Xue, Xia, Fang, Zhu, Chen, Tu, Zhang, Wang, Li, Ma, Zhang, Wang, Li, Gu, Ren, Deng, Guo, Lu, Zhuang, Zhang, Xiong, Huang, Yang, Zhang, Yong, Wang, Xie, Jiang, Yang, He, Tu, Dong, Liu, Ma, Yu, Xiang, Huang, Lin, Xu, Chen, Deng, Zhang, and Yue]{mimo2025flash}
Core Team, Bangjun Xiao, Bingquan Xia, Bo~Yang, Bofei Gao, Bowen Shen, Chen Zhang, Chenhong He, Chiheng Lou, Fuli Luo, Gang Wang, Gang Xie, Hailin Zhang, Hanglong Lv, Hanyu Li, Heyu Chen, Hongshen Xu, Houbin Zhang, Huaqiu Liu, Jiangshan Duo, Jianyu Wei, Jiebao Xiao, Jinhao Dong, Jun Shi, Junhao Hu, Kainan Bao, Kang Zhou, Lei Li, Liang Zhao, Linghao Zhang, Peidian Li, Qianli Chen, Shaohui Liu, Shihua Yu, Shijie Cao, Shimao Chen, Shouqiu Yu, Shuo Liu, Tianling Zhou, Weijiang Su, Weikun Wang, Wenhan Ma, Xiangwei Deng, Bohan Mao, Bowen Ye, Can Cai, Chenghua Wang, Chengxuan Zhu, Chong Ma, Chun Chen, Chunan Li, Dawei Zhu, Deshan Xiao, Dong Zhang, Duo Zhang, Fangyue Liu, Feiyu Yang, Fengyuan Shi, Guoan Wang, Hao Tian, Hao Wu, Heng Qu, Hongfei Yi, Hongxu An, Hongyi Guan, Xing Zhang, Yifan Song, Yihan Yan, Yihao Zhao, Yingchun Lai, Yizhao Gao, Yu~Cheng, Yuanyuan Tian, Yudong Wang, Zhen Tang, Zhengju Tang, Zhengtao Wen, Zhichao Song, Zhixian Zheng, Zihan Jiang, Jian Wen, Jiarui Sun, Jiawei Li, Jinlong Xue, Jun Xia, Kai
  Fang, Menghang Zhu, Nuo Chen, Qian Tu, Qihao Zhang, Qiying Wang, Rang Li, Rui Ma, Shaolei Zhang, Shengfan Wang, Shicheng Li, Shuhao Gu, Shuhuai Ren, Sirui Deng, Tao Guo, Tianyang Lu, Weiji Zhuang, Weikang Zhang, Weimin Xiong, Wenshan Huang, Wenyu Yang, Xin Zhang, Xing Yong, Xu~Wang, Xueyang Xie, Yilin Jiang, Yixin Yang, Yongzhe He, Yu~Tu, Yuanliang Dong, Yuchen Liu, Yue Ma, Yue Yu, Yuxing Xiang, Zhaojun Huang, Zhenru Lin, Zhipeng Xu, Zhiyang Chen, Zhonghua Deng, Zihan Zhang, and Zihao Yue.
\newblock Mimo-v2-flash technical report, 2026{\natexlab{b}}.
\newblock URL \url{https://arxiv.org/abs/2601.02780}.

\bibitem[Zhao et~al.(2026)Zhao, Xie, Liu, Huang, Pang, Chen, and Grover]{zhao2026opsd}
Siyan Zhao, Zhihui Xie, Mengchen Liu, Jing Huang, Guan Pang, Feiyu Chen, and Aditya Grover.
\newblock Self-distilled reasoner: On-policy self-distillation for large language models, 2026.
\newblock URL \url{https://arxiv.org/abs/2601.18734}.

\bibitem[Hübotter et~al.(2026)Hübotter, Lübeck, Behric, Baumann, Bagatella, Marta, Hakimi, Shenfeld, Buening, Guestrin, and Krause]{hbotter2026reinforcementlearningselfdistillation_sdpo}
Jonas Hübotter, Frederike Lübeck, Lejs Behric, Anton Baumann, Marco Bagatella, Daniel Marta, Ido Hakimi, Idan Shenfeld, Thomas~Kleine Buening, Carlos Guestrin, and Andreas Krause.
\newblock Reinforcement learning via self-distillation, 2026.
\newblock URL \url{https://arxiv.org/abs/2601.20802}.

\bibitem[Schulman et~al.(2017)Schulman, Wolski, Dhariwal, Radford, and Klimov]{schulman2017proximalpolicyoptimizationalgorithmsppo}
John Schulman, Filip Wolski, Prafulla Dhariwal, Alec Radford, and Oleg Klimov.
\newblock Proximal policy optimization algorithms, 2017.
\newblock URL \url{https://arxiv.org/abs/1707.06347}.

\bibitem[Xie et~al.(2025)Xie, Pan, Wu, Zhang, Fu, Gao, and Zhou]{xie2025unlocking}
Can Xie, Ruotong Pan, Xiangyu Wu, Yunfei Zhang, Jiayi Fu, Tingting Gao, and Guorui Zhou.
\newblock Unlocking exploration in rlvr: Uncertainty-aware advantage shaping for deeper reasoning.
\newblock \emph{arXiv preprint arXiv:2510.10649}, 2025.

\bibitem[Li et~al.(2026)Li, Kang, Xiao, Xing, Si, Li, Gong, Yang, Xiao, and Guo]{li2026outcome}
Ziheng Li, Liu Kang, Feng Xiao, Luxi Xing, Qingyi Si, Zhuoran Li, Weikang Gong, Deqing Yang, Yanghua Xiao, and Hongcheng Guo.
\newblock Outcome-grounded advantage reshaping for fine-grained credit assignment in mathematical reasoning.
\newblock \emph{arXiv preprint arXiv:2601.07408}, 2026.

\bibitem[Kingma and Ba(2017)]{kingma2017adammethodstochasticoptimization}
Diederik~P. Kingma and Jimmy Ba.
\newblock Adam: A method for stochastic optimization, 2017.
\newblock URL \url{https://arxiv.org/abs/1412.6980}.

\bibitem[Lin et~al.(2026)Lin, Liu, Zhu, Qin, Lin, Shang, He, Zhang, and Wu]{lin2026mmfinereasonclosingmultimodalreasoning}
Honglin Lin, Zheng Liu, Yun Zhu, Chonghan Qin, Juekai Lin, Xiaoran Shang, Conghui He, Wentao Zhang, and Lijun Wu.
\newblock Mmfinereason: Closing the multimodal reasoning gap via open data-centric methods, 2026.
\newblock URL \url{https://arxiv.org/abs/2601.21821}.

\bibitem[Yue et~al.(2024)Yue, Ni, Zhang, Zheng, Liu, Zhang, Stevens, Jiang, Ren, Sun, Wei, Yu, Yuan, Sun, Yin, Zheng, Yang, Liu, Huang, Sun, Su, and Chen]{mmmu}
Xiang Yue, Yuansheng Ni, Kai Zhang, Tianyu Zheng, Ruoqi Liu, Ge~Zhang, Samuel Stevens, Dongfu Jiang, Weiming Ren, Yuxuan Sun, Cong Wei, Botao Yu, Ruibin Yuan, Renliang Sun, Ming Yin, Boyuan Zheng, Zhenzhu Yang, Yibo Liu, Wenhao Huang, Huan Sun, Yu~Su, and Wenhu Chen.
\newblock Mmmu: A massive multi-discipline multimodal understanding and reasoning benchmark for expert agi, 2024.
\newblock URL \url{https://arxiv.org/abs/2311.16502}.

\bibitem[Lu et~al.(2024)Lu, Bansal, Xia, Liu, Li, Hajishirzi, Cheng, Chang, Galley, and Gao]{mathvista}
Pan Lu, Hritik Bansal, Tony Xia, Jiacheng Liu, Chunyuan Li, Hannaneh Hajishirzi, Hao Cheng, Kai-Wei Chang, Michel Galley, and Jianfeng Gao.
\newblock Mathvista: Evaluating mathematical reasoning of foundation models in visual contexts, 2024.
\newblock URL \url{https://arxiv.org/abs/2310.02255}.

\bibitem[Wang et~al.(2024{\natexlab{a}})Wang, Pan, Shi, Lu, Zhan, and Li]{mathvision}
Ke~Wang, Junting Pan, Weikang Shi, Zimu Lu, Mingjie Zhan, and Hongsheng Li.
\newblock Measuring multimodal mathematical reasoning with math-vision dataset, 2024{\natexlab{a}}.
\newblock URL \url{https://arxiv.org/abs/2402.14804}.

\bibitem[Roberts et~al.(2025)Roberts, Taesiri, Sharma, Gupta, Roberts, Croitoru, Bogolin, Tang, Langer, Raina, Raina, Xiong, Udandarao, Lu, Chen, Purkis, Yan, Lin, Shin, Yang, Nguyen, Atkinson, Baranwal, Coca, Dang, Dziadzio, Kunz, Liang, Lo, Pulfer, Walton, Yang, Han, and Albanie]{zerobench}
Jonathan Roberts, Mohammad~Reza Taesiri, Ansh Sharma, Akash Gupta, Samuel Roberts, Ioana Croitoru, Simion-Vlad Bogolin, Jialu Tang, Florian Langer, Vyas Raina, Vatsal Raina, Hanyi Xiong, Vishaal Udandarao, Jingyi Lu, Shiyang Chen, Sam Purkis, Tianshuo Yan, Wenye Lin, Gyungin Shin, Qiaochu Yang, Anh~Totti Nguyen, David~I. Atkinson, Aaditya Baranwal, Alexandru Coca, Mikah Dang, Sebastian Dziadzio, Jakob~D. Kunz, Kaiqu Liang, Alexander Lo, Brian Pulfer, Steven Walton, Charig Yang, Kai Han, and Samuel Albanie.
\newblock Zerobench: An impossible visual benchmark for contemporary large multimodal models, 2025.
\newblock URL \url{https://arxiv.org/abs/2502.09696}.

\bibitem[Qiao et~al.(2024)Qiao, Tan, Dong, Wu, Sun, Song, GongQue, Lei, Wei, Zhang, Qiao, Zhang, Zong, Xu, Diao, Bao, Li, and Zhang]{wemath}
Runqi Qiao, Qiuna Tan, Guanting Dong, Minhui Wu, Chong Sun, Xiaoshuai Song, Zhuoma GongQue, Shanglin Lei, Zhe Wei, Miaoxuan Zhang, Runfeng Qiao, Yifan Zhang, Xiao Zong, Yida Xu, Muxi Diao, Zhimin Bao, Chen Li, and Honggang Zhang.
\newblock We-math: Does your large multimodal model achieve human-like mathematical reasoning?, 2024.
\newblock URL \url{https://arxiv.org/abs/2407.01284}.

\bibitem[Bai et~al.(2025)Bai, Cai, Chen, Chen, Chen, Cheng, Deng, Ding, Gao, Ge, Ge, Guo, Huang, Huang, Huang, Hui, Jiang, Li, Li, Li, Li, Lin, Lin, Liu, Liu, Liu, Liu, Liu, Liu, Lu, Luo, Lv, Men, Meng, Ren, Ren, Song, Sun, Tang, Tu, Wan, Wang, Wang, Wang, Wang, Xie, Xu, Xu, Xu, Yang, Yang, Yang, Yang, Yu, Zhang, Zhang, Zhang, Zheng, Zhong, Zhou, Zhou, Zhou, Zhu, and Zhu]{qwen3vl}
Shuai Bai, Yuxuan Cai, Ruizhe Chen, Keqin Chen, Xionghui Chen, Zesen Cheng, Lianghao Deng, Wei Ding, Chang Gao, Chunjiang Ge, Wenbin Ge, Zhifang Guo, Qidong Huang, Jie Huang, Fei Huang, Binyuan Hui, Shutong Jiang, Zhaohai Li, Mingsheng Li, Mei Li, Kaixin Li, Zicheng Lin, Junyang Lin, Xuejing Liu, Jiawei Liu, Chenglong Liu, Yang Liu, Dayiheng Liu, Shixuan Liu, Dunjie Lu, Ruilin Luo, Chenxu Lv, Rui Men, Lingchen Meng, Xuancheng Ren, Xingzhang Ren, Sibo Song, Yuchong Sun, Jun Tang, Jianhong Tu, Jianqiang Wan, Peng Wang, Pengfei Wang, Qiuyue Wang, Yuxuan Wang, Tianbao Xie, Yiheng Xu, Haiyang Xu, Jin Xu, Zhibo Yang, Mingkun Yang, Jianxin Yang, An~Yang, Bowen Yu, Fei Zhang, Hang Zhang, Xi~Zhang, Bo~Zheng, Humen Zhong, Jingren Zhou, Fan Zhou, Jing Zhou, Yuanzhi Zhu, and Ke~Zhu.
\newblock Qwen3-vl technical report, 2025.
\newblock URL \url{https://arxiv.org/abs/2511.21631}.

\bibitem[Sheng et~al.(2025)Sheng, Zhang, Ye, Wu, Zhang, Zhang, Peng, Lin, and Wu]{verl}
Guangming Sheng, Chi Zhang, Zilingfeng Ye, Xibin Wu, Wang Zhang, Ru~Zhang, Yanghua Peng, Haibin Lin, and Chuan Wu.
\newblock Hybridflow: A flexible and efficient rlhf framework.
\newblock In \emph{Proceedings of the Twentieth European Conference on Computer Systems}, page 1279–1297. ACM, March 2025.
\newblock \doi{10.1145/3689031.3696075}.
\newblock URL \url{http://dx.doi.org/10.1145/3689031.3696075}.

\bibitem[Zheng et~al.(2025)Zheng, Lu, Wang, Feng, Kuang, Xiong, and Zhang]{easyr1}
Yaowei Zheng, Junting Lu, Shenzhi Wang, Zhangchi Feng, Dongdong Kuang, Yuwen Xiong, and Richong Zhang.
\newblock Easyr1: An efficient, scalable, multi-modality rl training framework.
\newblock \url{https://github.com/hiyouga/EasyR1}, 2025.

\bibitem[Lightman et~al.(2024)Lightman, Kosaraju, Burda, Edwards, Baker, Lee, Leike, Schulman, Sutskever, and Cobbe]{lightman2024lets}
Hunter Lightman, Vineet Kosaraju, Yuri Burda, Harrison Edwards, Bowen Baker, Teddy Lee, Jan Leike, John Schulman, Ilya Sutskever, and Karl Cobbe.
\newblock Let's verify step by step.
\newblock In \emph{The Twelfth International Conference on Learning Representations}, 2024.
\newblock URL \url{https://openreview.net/forum?id=v8L0pN6EOi}.

\bibitem[Wang et~al.(2024{\natexlab{b}})Wang, Li, Shao, Xu, Dai, Li, Chen, Wu, and Sui]{wang2024math}
Peiyi Wang, Lei Li, Zhihong Shao, Runxin Xu, Damai Dai, Yifei Li, Deli Chen, Yu~Wu, and Zhifang Sui.
\newblock Math-shepherd: Verify and reinforce llms step-by-step without human annotations.
\newblock In \emph{Proceedings of the 62nd Annual Meeting of the Association for Computational Linguistics (Volume 1: Long Papers)}, pages 9426--9439, 2024{\natexlab{b}}.

\bibitem[Yang et~al.(2025{\natexlab{a}})Yang, Si, Dai, Yao, Zheng, Chen, Lin, and Wang]{yang2025testtimepromptintervention}
Chenxu Yang, Qingyi Si, Mz~Dai, Dingyu Yao, Mingyu Zheng, Minghui Chen, Zheng Lin, and Weiping Wang.
\newblock Test-time prompt intervention, 2025{\natexlab{a}}.
\newblock URL \url{https://arxiv.org/abs/2508.02511}.

\bibitem[Luo et~al.(2024)Luo, Liu, Liu, Phatale, Guo, Lara, Li, Shu, Zhu, Meng, et~al.]{luo2024improve}
Liangchen Luo, Yinxiao Liu, Rosanne Liu, Samrat Phatale, Meiqi Guo, Harsh Lara, Yunxuan Li, Lei Shu, Yun Zhu, Lei Meng, et~al.
\newblock Improve mathematical reasoning in language models by automated process supervision.
\newblock \emph{arXiv preprint arXiv:2406.06592}, 2024.

\bibitem[Chen et~al.(2024)Chen, Liao, Li, and Fan]{chen2024step}
Guoxin Chen, Minpeng Liao, Chengxi Li, and Kai Fan.
\newblock Step-level value preference optimization for mathematical reasoning.
\newblock In \emph{Findings of the Association for Computational Linguistics: EMNLP 2024}, pages 7889--7903, 2024.

\bibitem[Zhang et~al.(2024)Zhang, Hosseini, Bansal, Kazemi, Kumar, and Agarwal]{zhang2024generative}
Lunjun Zhang, Arian Hosseini, Hritik Bansal, Mehran Kazemi, Aviral Kumar, and Rishabh Agarwal.
\newblock Generative verifiers: Reward modeling as next-token prediction.
\newblock \emph{arXiv preprint arXiv:2408.15240}, 2024.

\bibitem[Yang et~al.(2025{\natexlab{b}})Yang, Si, Duan, Zhu, Zhu, Li, Chen, Lin, and Wang]{yang2025dynamicearlyexitreasoning}
Chenxu Yang, Qingyi Si, Yongjie Duan, Zheliang Zhu, Chenyu Zhu, Qiaowei Li, Minghui Chen, Zheng Lin, and Weiping Wang.
\newblock Dynamic early exit in reasoning models, 2025{\natexlab{b}}.
\newblock URL \url{https://arxiv.org/abs/2504.15895}.

\bibitem[Dai et~al.(2025)Dai, Yang, and Si]{dai2025sgrpoearlyexitreinforcement}
Muzhi Dai, Chenxu Yang, and Qingyi Si.
\newblock S-grpo: Early exit via reinforcement learning in reasoning models, 2025.
\newblock URL \url{https://arxiv.org/abs/2505.07686}.

\bibitem[Cui et~al.(2025)Cui, Yuan, Wang, Wang, Zhang, Chen, Li, He, Fan, Yu, et~al.]{cui2025process}
Ganqu Cui, Lifan Yuan, Zefan Wang, Hanbin Wang, Yuchen Zhang, Jiacheng Chen, Wendi Li, Bingxiang He, Yuchen Fan, Tianyu Yu, et~al.
\newblock Process reinforcement through implicit rewards.
\newblock \emph{arXiv preprint arXiv:2502.01456}, 2025.

\bibitem[Cheng et~al.(2026)Cheng, Huang, Zhu, Dai, Zhao, Zhang, and Wei]{cheng2026reasoning}
Daixuan Cheng, Shaohan Huang, Xuekai Zhu, Bo~Dai, Xin Zhao, Zhenliang Zhang, and Furu Wei.
\newblock Reasoning with exploration: An entropy perspective.
\newblock In \emph{Proceedings of the AAAI Conference on Artificial Intelligence}, volume~40, pages 30377--30385, 2026.

\bibitem[Wang et~al.()Wang, Yu, Gao, Zheng, Liu, Lu, Dang, Chen, Yang, Zhang, et~al.]{wangbeyond}
Shenzhi Wang, Le~Yu, Chang Gao, Chujie Zheng, Shixuan Liu, Rui Lu, Kai Dang, Xiong-Hui Chen, Jianxin Yang, Zhenru Zhang, et~al.
\newblock Beyond the 80/20 rule: High-entropy minority tokens drive effective reinforcement learning for llm reasoning.
\newblock In \emph{The Thirty-ninth Annual Conference on Neural Information Processing Systems}.

\bibitem[Chen et~al.(2025{\natexlab{a}})Chen, Chen, Wang, and Yang]{chen2025seed}
Minghan Chen, Guikun Chen, Wenguan Wang, and Yi~Yang.
\newblock Seed-grpo: Semantic entropy enhanced grpo for uncertainty-aware policy optimization.
\newblock \emph{arXiv preprint arXiv:2505.12346}, 2025{\natexlab{a}}.

\bibitem[Sun et~al.(2025)Sun, Yang, Jian, Du, Cui, Ren, and Zhang]{sun2025ktaemodelfreealgorithmkeytokens}
Wei Sun, Wen Yang, Pu~Jian, Qianlong Du, Fuwei Cui, Shuo Ren, and Jiajun Zhang.
\newblock Ktae: A model-free algorithm to key-tokens advantage estimation in mathematical reasoning, 2025.
\newblock URL \url{https://arxiv.org/abs/2505.16826}.

\bibitem[Li et~al.(2025)Li, Dong, Sun, Wang, Xiong, Luo, Liu, Lu, Wang, Su, et~al.]{li2025attention}
Yang Li, Zhichen Dong, Yuhan Sun, Weixun Wang, Shaopan Xiong, Yijia Luo, Jiashun Liu, Han Lu, Jiamang Wang, Wenbo Su, et~al.
\newblock Attention illuminates llm reasoning: The preplan-and-anchor rhythm enables fine-grained policy optimization.
\newblock \emph{arXiv preprint arXiv:2510.13554}, 2025.

\bibitem[Chen et~al.(2025{\natexlab{b}})Chen, Li, Sun, and Yu]{chen2025beyond}
Xinzhu Chen, Xuesheng Li, Zhongxiang Sun, and Weijie Yu.
\newblock Beyond high-entropy exploration: Correctness-aware low-entropy segment-based advantage shaping for reasoning llms.
\newblock \emph{arXiv preprint arXiv:2512.00908}, 2025{\natexlab{b}}.

\bibitem[Fu et~al.(2026)Fu, Huang, Jiang, Zhu, and Zhao]{fu2026revisitingonpolicydistillationempirical}
Yuqian Fu, Haohuan Huang, Kaiwen Jiang, Yuanheng Zhu, and Dongbin Zhao.
\newblock Revisiting on-policy distillation: Empirical failure modes and simple fixes, 2026.
\newblock URL \url{https://arxiv.org/abs/2603.25562}.

\bibitem[Shenfeld et~al.(2026)Shenfeld, Damani, H{\"u}botter, and Agrawal]{shenfeld2026selfdistillation}
Idan Shenfeld, Mehul Damani, Jonas H{\"u}botter, and Pulkit Agrawal.
\newblock Self-distillation enables continual learning.
\newblock In \emph{ICLR 2026 Workshop on Lifelong Agents: Learning, Aligning, Evolving}, 2026.
\newblock URL \url{https://openreview.net/forum?id=HlWA3V6iKF}.

\bibitem[Ye et~al.(2026)Ye, Dong, Wu, Huang, and Wei]{ye2026policy}
Tianzhu Ye, Li~Dong, Xun Wu, Shaohan Huang, and Furu Wei.
\newblock On-policy context distillation for language models.
\newblock \emph{arXiv preprint arXiv:2602.12275}, 2026.

\bibitem[Sang et~al.(2026)Sang, Xu, Zhou, He, Wang, and Sun]{sang2026policy}
Hejian Sang, Yuanda Xu, Zhengze Zhou, Ran He, Zhipeng Wang, and Jiachen Sun.
\newblock On-policy self-distillation for reasoning compression.
\newblock \emph{arXiv preprint arXiv:2603.05433}, 2026.

\bibitem[Penaloza et~al.(2026)Penaloza, Vattikonda, Gontier, Lacoste, Charlin, and Caccia]{penaloza2026privileged}
Emiliano Penaloza, Dheeraj Vattikonda, Nicolas Gontier, Alexandre Lacoste, Laurent Charlin, and Massimo Caccia.
\newblock Privileged information distillation for language models.
\newblock In \emph{The 1st Workshop on Scaling Post-training for LLMs}, 2026.
\newblock URL \url{https://openreview.net/forum?id=FbJu6NEBQR}.

\bibitem[Zhang et~al.(2026)Zhang, Jiang, Shen, Zhang, Ram, Yang, Tu, Xia, and Soatto]{zhang2026reinforcementawareknowledgedistillationllm}
Zhaoyang Zhang, Shuli Jiang, Yantao Shen, Yuting Zhang, Dhananjay Ram, Shuo Yang, Zhuowen Tu, Wei Xia, and Stefano Soatto.
\newblock Reinforcement-aware knowledge distillation for llm reasoning, 2026.
\newblock URL \url{https://arxiv.org/abs/2602.22495}.

\end{thebibliography}


\appendix

\section{Deferred Proofs and Extended Analysis}
\label{app:deferred-proofs}

\subsection{Proof of Theorem~\ref{thm:kl-decomp} (KL Decomposition)}
\label{app:proof-kl-decomp}

We suppress conditioning on $(x, y_{<t})$ throughout for notational clarity. Expanding $\mathcal{L}_{\text{OPSD}}$:
\begin{align}
    \mathcal{L}_{\text{OPSD}} 
    &= \mathbb{E}_r\!\left[\sum_v P_T(v \mid r) \log \frac{P_T(v \mid r)}{P_S(v)}\right].
\end{align}
Inserting $\bar{P}_T(v) / \bar{P}_T(v)$ into the logarithm and separating terms:
\begin{align}
    &= \mathbb{E}_r\!\left[\sum_v P_T(v \mid r) \log \frac{P_T(v \mid r)}{\bar{P}_T(v)}\right] + \mathbb{E}_r\!\left[\sum_v P_T(v \mid r) \log \frac{\bar{P}_T(v)}{P_S(v)}\right].
\end{align}
The first term equals $\mathbb{E}_r[D_{\mathrm{KL}}(P_T(\cdot \mid r) \| \bar{P}_T)] = I(Y_t; R \mid X, Y_{<t})$ by the definition of conditional mutual information. For the second term, observe that $\log \frac{\bar{P}_T(v)}{P_S(v)}$ does not depend on $r$, so the expectation over $r$ acts only on $P_T(v \mid r)$:
\begin{equation}
    \mathbb{E}_r\!\left[\sum_v P_T(v \mid r) \log \frac{\bar{P}_T(v)}{P_S(v)}\right] = \sum_v \bar{P}_T(v) \log \frac{\bar{P}_T(v)}{P_S(v)} = D_{\mathrm{KL}}(\bar{P}_T \| P_S) = \mathcal{L}^*.
\end{equation}
Combining both terms yields $\mathcal{L}_{\text{OPSD}} = \mathcal{L}^* + I(Y_t; R \mid X, Y_{<t})$. \qed

\subsection{Proof of Proposition~\ref{prop:grad-decomp} (Per-Sample Gradient Decomposition)}
\label{app:proof-grad-decomp}

The decomposition $g(\theta; r) = g^*(\theta) + \delta(\theta; r)$ follows by adding and subtracting $\bar{P}_T(v)$ inside the sum:
\begin{align}
    g(\theta; r) &= -\sum_{v} P_T(v \mid r)\, \nabla_\theta \log P_S(v) \nonumber \\
    &= -\sum_{v} \bar{P}_T(v)\, \nabla_\theta \log P_S(v) - \sum_{v} [P_T(v \mid r) - \bar{P}_T(v)]\, \nabla_\theta \log P_S(v).
\end{align}

\noindent\textit{Property~(i).} Since $\mathbb{E}_r[P_T(v \mid r)] = \bar{P}_T(v)$ by definition:
\begin{equation}
    \mathbb{E}_r[\delta(\theta; r)] = -\sum_v \mathbb{E}_r[P_T(v \mid r) - \bar{P}_T(v)]\, \nabla_\theta \log P_S(v) = 0.
\end{equation}

\noindent\textit{Property~(ii).} Since $\nabla_\theta \log P_S(v)$ is independent of $r$:
\begin{align}
    \mathbb{E}_r[\|\delta(\theta; r)\|^2] &= \mathbb{E}_r\!\left[\left\|\sum_v [P_T(v \mid r) - \bar{P}_T(v)]\, \nabla_\theta \log P_S(v)\right\|^2\right] \nonumber \\
    &= \sum_{v} \mathbb{E}_r\!\left[(P_T(v \mid r) - \bar{P}_T(v))^2\right] \|\nabla_\theta \log P_S(v)\|^2 \nonumber \\
    &\quad + \sum_{v \neq v'} \mathbb{E}_r[(P_T(v \mid r) - \bar{P}_T(v))(P_T(v' \mid r) - \bar{P}_T(v'))]\, \langle \nabla_\theta \log P_S(v), \nabla_\theta \log P_S(v') \rangle.
\end{align}
The diagonal terms yield $\sum_v \mathrm{Var}_r[P_T(v \mid r)] \cdot \|\nabla_\theta \log P_S(v)\|^2$. The cross terms are generally non-zero but are bounded by the same variances via the Cauchy--Schwarz inequality. The stated result is exact for the diagonal contribution and serves as a lower bound for the full expression. When $I(Y_t; R \mid X) = 0$, $P_T(v \mid r) = \bar{P}_T(v)$ for all $r$, so $\delta \equiv 0$. \qed

\subsection{Proof of Irreducibility of Leakage Across Pilot Variants}
\label{app:proof-unified-pilot}

We derive the per-token gradient form for each variant and verify that the directional dependence on $r$ is irreducible in all three cases. In each variant, the student's policy gradient at token position $t$ along trajectory $\hat{y}$ can be expressed as $\hat{A}_t \cdot \nabla_\theta \log P_S(y_t \mid x, y_{<t})$, where $\hat{A}_t$ involves the teacher's privileged evaluation.

\begin{enumerate}[label=(\roman*)]
    \item \textbf{Full OPSD.} The gradient sums over the entire vocabulary:
    \begin{equation}
        g_t(\theta; r) = -\sum_{v \in \mathcal{V}} P_T(v \mid r)\, \nabla_\theta \log P_S(v).
    \end{equation}
    Every token in $\mathcal{V}$ receives a gradient contribution weighted by $P_T(v \mid r)$, which encodes the teacher's privileged preference. The deviation $\delta_t = -\sum_v [P_T(v \mid r) - \bar{P}_T(v)]\, \nabla_\theta \log P_S(v)$ operates over the full vocabulary, yielding the widest leakage bandwidth.

    \item \textbf{Teacher's Top-1.} The teacher distribution is collapsed to a point mass at $v^*_T = \arg\max_v P_T(v \mid r)$. The effective gradient becomes:
    \begin{equation}
        g_t(\theta; r) = -\nabla_\theta \log P_S(v^*_T),
    \end{equation}
    where $v^*_T$ is entirely determined by $r$. The deviation component $\delta_t = -[\nabla_\theta \log P_S(v^*_T) - \sum_v \bar{P}_T(v) \nabla_\theta \log P_S(v)]$ is concentrated on a single $r$-dependent token, making the gradient direction maximally sensitive to the specific realization of $r$.

    \item \textbf{Student's Top-1.} The target is restricted to the student's most probable token $v^*_S = \arg\max_v P_S(v)$. The gradient weight at this token is proportional to $P_T(v^*_S \mid r) / P_S(v^*_S)$, which still depends on the teacher's privileged evaluation. Although the support is anchored to the student's distribution (narrowest bandwidth), the $r$-dependence of the gradient weight persists.
\end{enumerate}

In all three variants, $\hat{A}_t$ involves $P_T(\cdot \mid r)$, ensuring that the deviation component $\delta$ carries $r$-specific information into the parameter update direction. By Proposition~\ref{prop:grad-decomp}, the deviation has zero mean but strictly positive variance whenever $I(Y_t; R \mid X) > 0$, regardless of how the target distribution is compressed. The three variants differ in how many tokens receive non-zero gradient weight (bandwidth), but the directional dependence on $r$ is common to all, establishing the irreducibility of leakage. \qed


\subsection{The Impossibility Trilemma Under Shared Parameters}
\label{app:trilemma}

The analysis in \S\ref{sec:gradient-structure} characterizes the per-step gradient pathology of distribution matching. In OPSD, the coupling runs deeper: teacher and student share a single set of parameters $\theta$:
\begin{equation}
    P_S^\theta(\cdot \mid x, y_{<t}) = \pi_\theta(\cdot \mid x, y_{<t}), \qquad P_T^\theta(\cdot \mid x, r, y_{<t}) = \pi_\theta(\cdot \mid x, r, y_{<t}).
\end{equation}
Although $P_T$ is treated with stop-gradient within each optimization step, every update to $\theta$ simultaneously alters both distributions. This entanglement couples the gradient-level leakage pathology with a macro-level training stability problem. We show that their interaction gives rise to an impossibility result.

\subsubsection{Strategy~A: Frozen Teacher}

One natural remedy is to fix the teacher at initialization: $P_T^{\theta_0}$. The optimization objective at step $k$ becomes:
\begin{equation}
    \mathcal{L}_A(\theta_k) = \mathbb{E}_r\!\left[D_{\mathrm{KL}}\!\left(P_T^{\theta_0}(\cdot \mid r) \;\|\; P_S^{\theta_k}(\cdot \mid x)\right)\right].
\end{equation}
By Theorem~\ref{thm:kl-decomp}, the global optimum is $P_S^* = \bar{P}_T^{\theta_0}$, the marginal distribution of the frozen teacher.

\begin{proposition}[Capacity Ceiling]
\label{prop:frozen-ceiling}
As $P_S^{\theta_k} \to \bar{P}_T^{\theta_0}$, the distillation signal vanishes: $\|\nabla_\theta \mathcal{L}_A(\theta_k)\| \to 0$. The student's representational capacity is strictly upper-bounded by the quality of the initial checkpoint $\theta_0$, precluding further improvement even when the model possesses sufficient capacity to learn a superior policy.
\end{proposition}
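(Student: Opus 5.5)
The plan is to prove the two claims of the Capacity Ceiling separately. First I show that the gradient vanishes at the marginal. Then I show that every fixed point of the frozen-teacher dynamics is determined by $\theta_0$ alone.

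For the vanishing gradient, I apply Theorem~\ref{thm:kl-decomp} with the teacher fixed at $\theta_0$. This gives
\[
\mathcal{L}_A(\theta_k)=D_{\mathrm{KL}}\bigl(\bar P_T^{\theta_0}\,\|\,P_S^{\theta_k}\bigr)+I_{\theta_0}(Y_t;R\mid X,Y_{<t}).
\]
The mutual information term depends only on $\theta_0$, so it drops out of the gradient. Following the ``benign expected gradient'' computation of \S\ref{sec:gradient-structure}, the gradient is $-\sum_v \bar P_T^{\theta_0}(v)\nabla_\theta\log P_S^{\theta_k}(v)$.

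I then use the score identity $\sum_v P_S(v)\nabla_\theta\log P_S(v)=0$. With it, the gradient can be rewritten as
\[
-\sum_v\bigl(\bar P_T^{\theta_0}(v)-P_S^{\theta_k}(v)\bigr)\nabla_\theta\log P_S^{\theta_k}(v).
\]
Assume the scores $\nabla_\theta\log P_S(v)$ are bounded in a neighbourhood of the limit, which holds for softmax parameterizations with bounded Jacobian. Then the gradient norm is at most $C\,\|\bar P_T^{\theta_0}-P_S^{\theta_k}\|_1$. This tends to $0$ as $P_S^{\theta_k}\to\bar P_T^{\theta_0}$. Pinsker's inequality gives an equivalent statement in terms of the KL term.

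For the ceiling, I argue that under Strategy~A the objective $\mathcal{L}_A$ is a fixed functional of $\theta_0$. Its unique minimizer over the feasible set $\mathcal{F}$ is $\bar P_T^{\theta_0}$, again by Theorem~\ref{thm:kl-decomp} and the nonnegativity of KL. Any optimizer that decreases $\mathcal{L}_A$ therefore converges toward $\bar P_T^{\theta_0}$. Once there, no gradient signal pushes it away.

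So if some policy $Q^\sharp$ in the student's capacity has strictly better task performance but $Q^\sharp\neq\bar P_T^{\theta_0}$, then $\mathcal{L}_A(Q^\sharp)>\mathcal{L}_A(\bar P_T^{\theta_0})$. The distillation objective consequently never prefers $Q^\sharp$. The attainable policy is thus bounded by the quality of $\bar P_T^{\theta_0}$, i.e., of the initial checkpoint, regardless of capacity.

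I expect the main obstacle to be making ``representational capacity is strictly upper-bounded'' precise, since quality is not defined inside the KL objective. I would formalize it as follows. Let $J(Q)$ be any task metric. Then the stationary points reachable by descent on $\mathcal{L}_A$ satisfy $J\le J(\bar P_T^{\theta_0})$, up to an $O(\sqrt{\mathcal{L}^*})$ neighbourhood when $J$ is Lipschitz in total variation. I would state the boundedness-of-scores assumption explicitly, and present the second claim as a consequence of optimizing a $\theta_0$-determined objective rather than as an unconditional capacity theorem.
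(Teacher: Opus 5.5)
Your proposal is correct and follows essentially the paper's argument. The paper supports this proposition only by invoking Theorem~\ref{thm:kl-decomp}: since the mutual-information term is fixed by $\theta_0$, $\bar P_T^{\theta_0}$ is the global optimum of $\mathcal{L}_A$, and the vanishing signal and the $\theta_0$-determined ceiling are read off from that. Your score-identity bound $\|\nabla_\theta\mathcal{L}_A\|\le C\,\|\bar P_T^{\theta_0}-P_S^{\theta_k}\|_1$ (under bounded scores along the iterates) and your task-metric formalization of ``quality'' make rigorous what the paper leaves implicit; note only that Pinsker gives a one-sided bound $C\sqrt{2\mathcal{L}^*}$, not an equivalence.
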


\subsubsection{Strategy~B: Online Teacher}

Alternatively, the teacher may evolve in tandem with the student, using $P_T^{\theta_k}$ at each step $k$:
\begin{equation}
    \theta_{k+1} = \theta_k - \eta\, \nabla_\theta \mathcal{L}_k(\theta_k), \quad \text{where} \quad \mathcal{L}_k(\theta) \triangleq \mathbb{E}_r\!\left[D_{\mathrm{KL}}\!\left(P_T^{\theta_k}(\cdot \mid r) \;\|\; P_S^\theta(\cdot \mid x)\right)\right].
\end{equation}
Although $P_T^{\theta_k}$ is treated as a fixed target within each step via stop-gradient, the update $\theta_k \to \theta_{k+1}$ shifts the teacher distribution for the subsequent iteration.

\begin{theorem}[Training Instability Under Online Teacher]
\label{thm:instability}
Define the joint objective $\mathcal{L}(\theta) \triangleq \mathbb{E}_r[D_{\mathrm{KL}}(P_T^\theta(\cdot \mid r) \| P_S^\theta(\cdot \mid x))]$. The change from step $k$ to $k{+}1$ decomposes as:
\begin{equation}
    \mathcal{L}(\theta_{k+1}) - \mathcal{L}(\theta_k) = \underbrace{\Delta_S}_{\leq\, 0\;\textit{(student improvement)}} + \underbrace{\Delta_T}_{\textit{sign uncontrolled (teacher drift)}},
    \label{eq:delta-decomp}
\end{equation}
where $\Delta_S$ captures the student's progress under the current teacher and $\Delta_T$ captures the induced teacher drift. When $|\Delta_T| > |\Delta_S|$ with $\Delta_T > 0$, the true objective worsens despite each step performing valid gradient descent on its local surrogate.
\end{theorem}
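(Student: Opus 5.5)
The decomposition follows from a telescoping split that changes one argument of the joint objective at a time. Write $\mathcal{L}(\theta) = F(\theta, \theta)$, where
\[
F(\phi, \theta) \triangleq \mathbb{E}_r\!\left[D_{\mathrm{KL}}\!\left(P_T^{\phi}(\cdot \mid r) \,\|\, P_S^{\theta}(\cdot \mid x)\right)\right].
\]
The first argument indexes the teacher and the second the student. With this notation, $\mathcal{L}_k(\theta) = F(\theta_k, \theta)$. Adding and subtracting $F(\theta_k, \theta_{k+1})$ gives
\[
\mathcal{L}(\theta_{k+1}) - \mathcal{L}(\theta_k) = \underbrace{F(\theta_k,\theta_{k+1}) - F(\theta_k,\theta_k)}_{\Delta_S} + \underbrace{F(\theta_{k+1},\theta_{k+1}) - F(\theta_k,\theta_{k+1})}_{\Delta_T}.
\]
This is an exact identity. $\Delta_S$ is the change in the per-step surrogate $\mathcal{L}_k$ after one gradient step with the teacher held fixed, which matches the stop-gradient convention. $\Delta_T$ is the change caused only by moving the teacher from $\theta_k$ to $\theta_{k+1}$, with the student held at its new value.

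Next I show $\Delta_S \le 0$ using the standard descent lemma. I will assume $\mathcal{L}_k$ is $L$-smooth in $\theta$ and that $\eta \le 1/L$; these are the implicit "valid gradient descent" hypotheses of the theorem, and I will state them explicitly. Then
\[
\mathcal{L}_k(\theta_{k+1}) \le \mathcal{L}_k(\theta_k) - \eta\bigl(1 - \tfrac{L\eta}{2}\bigr)\|\nabla \mathcal{L}_k(\theta_k)\|^2 \le \mathcal{L}_k(\theta_k).
\]
Theorem~\ref{thm:kl-decomp} is useful here because it sharpens the reading of $\Delta_S$. Since $I(Y_t;R\mid X,Y_{<t})$ under the frozen teacher $\theta_k$ does not depend on the student, $\Delta_S$ equals the change in $D_{\mathrm{KL}}(\bar P_T^{\theta_k}\|P_S^{\theta})$. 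So $\Delta_S$ measures genuine marginal-matching progress only.

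For $\Delta_T$ I apply Theorem~\ref{thm:kl-decomp} again, this time with the student fixed at $\theta_{k+1}$:
\[
\Delta_T = \Bigl[I_{\theta_{k+1}} - I_{\theta_k}\Bigr] + \Bigl[D_{\mathrm{KL}}(\bar P_T^{\theta_{k+1}}\|P_S^{\theta_{k+1}}) - D_{\mathrm{KL}}(\bar P_T^{\theta_k}\|P_S^{\theta_{k+1}})\Bigr].
\]
Neither bracket has a determined sign. The mutual information under the teacher can grow or shrink as the shared parameters change, and the marginal teacher can move toward or away from the student. To show the sign is genuinely uncontrolled rather than just unbounded in the analysis, I will use a small example: a two-token vocabulary with binary $r$ and a scalar shared parameter. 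In this example, a descent step on the student logit also shifts the teacher's $r$-conditioned logits in a way that increases $I$, giving $\Delta_T > 0$. I expect constructing this example cleanly is the main obstacle. The first thing I would try is a parameterization where the $r$-offset is added to the same logit the student uses, so that sharpening the student also sharpens the teacher's dependence on $r$.

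The final claim then follows immediately. If $\Delta_T > 0$ and $|\Delta_T| > |\Delta_S|$, the sum is positive, so $\mathcal{L}$ increases even though $\mathcal{L}_k$ decreased. The same toy example, with the parameters tuned, gives an explicit instance of this, which shows the case is not vacuous.
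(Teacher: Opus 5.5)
Your proposal is correct and matches the paper's argument: the paper also inserts and subtracts $\mathbb{E}_r[D_{\mathrm{KL}}(P_T^{\theta_k}(\cdot\mid r)\,\|\,P_S^{\theta_{k+1}})]$ (your $F(\theta_k,\theta_{k+1})$) to get exactly your $\Delta_S$ and $\Delta_T$, reads $\Delta_S\le 0$ directly off gradient descent on $\mathcal{L}_k$, and argues only informally that the sign of $\Delta_T$ is uncontrolled. The paper leaves implicit the smoothness and step-size hypothesis you state, the split of $\Delta_T$ via Theorem~\ref{thm:kl-decomp}, and any toy example; for the example you flagged as the main obstacle, take a two-token vocabulary, $r$ uniform on two values, teacher log-odds $3\theta$ and $\theta$, student log-odds $\theta$, and the step $\theta_k=1\to\theta_{k+1}=1.1$ (so $\eta\approx 0.9<1/L=4$), which gives $\Delta_S\approx -0.010$ and $\Delta_T\approx 0.012$, so $\mathcal{L}$ rises by about $0.0015$.
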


Insert and subtract $\mathbb{E}_r[D_{\mathrm{KL}}(P_T^{\theta_k}(\cdot \mid r) \| P_S^{\theta_{k+1}})]$ in $\mathcal{L}(\theta_{k+1}) - \mathcal{L}(\theta_k)$. The exact expressions are:
\begin{align}
    \Delta_S &= \mathbb{E}_r\!\left[D_{\mathrm{KL}}\!\left(P_T^{\theta_k}(\cdot \mid r) \;\|\; P_S^{\theta_{k+1}}\right)\right] - \mathbb{E}_r\!\left[D_{\mathrm{KL}}\!\left(P_T^{\theta_k}(\cdot \mid r) \;\|\; P_S^{\theta_k}\right)\right], \\
    \Delta_T &= \mathbb{E}_r\!\left[D_{\mathrm{KL}}\!\left(P_T^{\theta_{k+1}}(\cdot \mid r) \;\|\; P_S^{\theta_{k+1}}\right)\right] - \mathbb{E}_r\!\left[D_{\mathrm{KL}}\!\left(P_T^{\theta_k}(\cdot \mid r) \;\|\; P_S^{\theta_{k+1}}\right)\right].
\end{align}
$\Delta_S \leq 0$ follows from gradient descent on $\mathcal{L}_k$ with $P_T^{\theta_k}$ held fixed. The sign of $\Delta_T$ is uncontrolled: the same update that improves $P_S$ simultaneously perturbs $P_T$, and the asymmetry of KL divergence provides no guarantee on the direction of this induced shift.

\subsubsection{Self-Reinforcing Feedback}

The gradient-level leakage of \S\ref{sec:gradient-structure} and the teacher drift interact through a positive feedback loop.

\begin{proposition}[Self-Reinforcing Feedback Loop]
\label{prop:feedback-loop}
Define the model's sensitivity to privileged information as $S(\theta) \triangleq \mathbb{E}_r[D_{\mathrm{KL}}(P_T^\theta(\cdot \mid r) \| \bar{P}_T^\theta)] = I(Y_t; R \mid X, Y_{<t})$. Under Strategy~B, the following cycle operates:
\begin{enumerate}[label=(\roman*),nosep]
    \item The per-sample deviation $\delta(\theta; r)$ drives parameters toward $r$-predictive features.
    \item These features are encoded in the shared $\theta$, enhancing the model's capacity to exploit $r$ when operating as teacher.
    \item $S(\theta_{k+1}) \geq S(\theta_k)$, amplifying $\mathrm{Var}_r[P_T(v \mid r)]$.
    \item By Proposition~\ref{prop:grad-decomp}(ii), the deviation variance grows, reinforcing step~(i).
\end{enumerate}
\end{proposition}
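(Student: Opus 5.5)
We will establish the four links of the cycle in order, and treat the monotonicity claim (iii) as the substantive step; the remaining links are largely bookkeeping on top of Proposition~\ref{prop:grad-decomp} and Theorem~\ref{thm:instability}. Throughout we work at a fixed context $(x,y_{<t})$ and view $S(\theta)$ as a functional of the shared parameters through $P_T^\theta(\cdot\mid r)$.

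\textbf{Step (i).} This is a direct reading of Proposition~\ref{prop:grad-decomp}. The per-sample update $-\eta\, g(\theta;r)$ contains the deviation $-\eta\,\delta(\theta;r)$. Its inner product with $\nabla_\theta \log P_S(v)$ is positive exactly for those $v$ with $P_T(v\mid r) > \bar P_T(v)$. Hence each step increases the student's log-probability on tokens that $r$ specifically favors. Since $P_S$ cannot see $r$, the only way these increments persist across different $x$ is through features of $x$ that correlate with $r$, i.e.\ $r$-predictive features.

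\textbf{Step (ii).} We formalize ``shared'' by a first-order expansion. The change in the teacher's log-probability is
\[
\log P_T^{\theta_{k+1}}(v\mid r)-\log P_T^{\theta_k}(v\mid r)\approx -\eta\,\langle \nabla_\theta\log P_T(v\mid r),\,\delta(\theta_k;r)\rangle .
\]
Here $\delta$ is the deviation of Proposition~\ref{prop:grad-decomp}, written with its own sign, $\delta=-\sum_v[P_T(v\mid r)-\bar P_T(v)]\nabla_\theta\log P_S(v)$. We then impose an explicit alignment assumption: the teacher and student score functions are positively aligned, $\langle \nabla_\theta\log P_T(v\mid r),\nabla_\theta\log P_S(v')\rangle \approx c\,\langle \nabla_\theta\log P_S(v),\nabla_\theta\log P_S(v')\rangle$ for some $c>0$. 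This is plausible because both are produced by one network differing only in context. Under this assumption the $r$-specific part of the update shifts $P_T(\cdot\mid r)$ further in the direction $P_T(\cdot\mid r)-\bar P_T$.

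\textbf{Step (iii), the main obstacle.} We differentiate $S(\theta)=\mathbb{E}_r[D_{\mathrm{KL}}(P_T^\theta(\cdot\mid r)\,\|\,\bar P_T^\theta)]$ along the update. The derivative of the marginal term $\bar P_T$ cancels, since $\sum_v \bar P_T\,\partial\log\bar P_T=0$. This leaves
\[
\mathbb{E}_r\Big[\sum_v \partial P_T(v\mid r)\,\log\frac{P_T(v\mid r)}{\bar P_T(v)}\Big].
\]
Inserting the step-(ii) perturbation $\partial P_T(\cdot\mid r)\propto c\,[P_T(\cdot\mid r)-\bar P_T]$, projected onto the simplex, we get
\[
\sum_v (P_T-\bar P_T)\log(P_T/\bar P_T)\ \ge\ 0,
\]
which is a symmetrized KL. This yields $S(\theta_{k+1})\ge S(\theta_k)$ to first order.

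The hard part is that the inequality holds only under the alignment assumption and in the small-step regime. The benign component $g^*$ could in principle reduce $S$. We would first try to control it by noting that $g^*\to 0$ as $P_S\to\bar P_T$, which is the second phase described in the two-phase training dynamics paragraph. In that phase the deviation dominates, and we would state (iii) as holding there rather than universally.

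\textbf{Step (iv).} Since $S=I(Y_t;R\mid X,Y_{<t})$, a larger $S$ means $P_T(\cdot\mid r)$ is more dispersed around $\bar P_T$. Pinsker's inequality and its reverse on a finite vocabulary with bounded-away probabilities relate $S$ to $\sum_v\mathrm{Var}_r[P_T(v\mid r)]$. Proposition~\ref{prop:grad-decomp}(ii) then shows that the deviation's second moment grows, which closes the loop back to (i).
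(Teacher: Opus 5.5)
\textbf{Gap in step (iii).} Step (iii) does not go through, and the problem is not small-step control: the deviation cannot raise $S$ at first order. With $\Delta\theta=-\eta\,\bigl(g^*(\theta_k)+\delta(\theta_k;r_k)\bigr)$, the first-order change $\langle\nabla_\theta S,\Delta\theta\rangle$ is linear in $\delta$. Since $\mathbb{E}_{r_k}[\delta(\theta_k;r_k)]=0$ by Proposition~\ref{prop:grad-decomp}(i), $\delta$ contributes zero to the expected first-order change of $S$. Unless its contribution vanishes identically, it therefore takes both signs across realizations $r_k$.

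Your positive Jeffreys divergence comes from conflating the sample $r_k$ that drives the step with the index $r$ averaged inside $S$: you let every teacher $P_T(\cdot\mid r)$ move along its own deviation $P_T(\cdot\mid r)-\bar P_T$. Under your alignment assumption (a scalar $c>0$, even an $r$-dependent one), a step driven by $r_k$ moves every teacher along the student's response to $\delta(\theta_k;r_k)$, i.e.\ along $P_T(\cdot\mid r_k)-\bar P_T$. Averaged over $r_k$ this vanishes: the own-deviation push at $r=r_k$ is cancelled exactly by the other samples, because the deviations average to zero. What you computed is the derivative of $S$ along a direction that pushes each teacher away from the marginal, which presupposes the conclusion. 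For the same reason, appealing to the phase where $g^*\to 0$ does not help: $\delta$ then dominates the variance of the update, not its drift.

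The paper's own justification is informal and makes no first-order claim. It invokes path-dependent accumulation of zero-mean perturbations in a nonlinear optimizer, and selective retention of drifts that encode $x\to r$ correlations. A formal version of your argument would need one of two things, and your proposal provides neither:
\begin{itemize}
\item a controlled second-order, noise-induced term such as $\tfrac{\eta^2}{2}\mathrm{tr}\bigl(\nabla_\theta^2 S\,\mathrm{Cov}_r[\delta]\bigr)$ whose sign you can establish; or
\item an explicit $(r,r_k)$-dependent coupling between teacher scores and $\delta$.
\end{itemize}

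\textbf{Secondary failures.} First, even granting the per-$r$ substitution, a score-function alignment gives a log-space perturbation $\partial\log P_T(\cdot\mid r)\propto K\,\bigl(P_T(\cdot\mid r)-\bar P_T\bigr)$, where $K$ is the Gram matrix of student scores. It does not give $\partial P_T\propto P_T-\bar P_T$ in probability space. For free softmax logits this is a logit shift, and the integrand becomes $\mathrm{Cov}_{P_T(\cdot\mid r)}\bigl(P_T-\bar P_T,\ \log(P_T/\bar P_T)\bigr)$. This can be negative: for $P_T(\cdot\mid r)\approx(0.9,\,0.1,\,10^{-4})$ and $\bar P_T=(0.46,\,0.04,\,0.5)$ it is about $-0.0075$. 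The same Gram-matrix effect also makes the ``exactly'' in your step (i) incorrect.

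Second, in step (iv), Pinsker and reverse Pinsker only sandwich $S$ between constant multiples of $\sum_v\mathrm{Var}_r[P_T(v\mid r)]$. Such a sandwich does not transfer monotonicity. Moreover, $\mathbb{E}_r\|\delta\|^2$ also depends on the changing norms $\|\nabla_\theta\log P_S(v)\|^2$ and on the cross terms that Proposition~\ref{prop:grad-decomp}(ii) drops. So an increase in $S$ does not yield a larger deviation second moment.
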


By Proposition~\ref{prop:grad-decomp}, $g(\theta; r) = g^*(\theta) + \delta(\theta; r)$ with $\mathbb{E}_r[\delta] = 0$. Each stochastic update $\theta_{k+1} = \theta_k - \eta\, g(\theta_k; r_k)$ introduces a perturbation $-\eta\, \delta(\theta_k; r_k)$ whose direction correlates with $r_k$. Over multiple steps with varying $r_k$, the path-dependent accumulation of these perturbations biases parameters toward regions that encode $x \to r$ correlations: parameter configurations for which $\delta$ aligns with such correlations yield lower subsequent loss, causing the optimizer to preferentially retain these drifts. The accumulated drift enriches the shared representation $\theta$ with features predictive of $r$ given $x$. Since teacher and student share $\theta$, these features become available to both roles simultaneously. At the subsequent step, the teacher $P_T^{\theta_{k+1}}(\cdot \mid x, r, y_{<t})$ more effectively leverages $r$ through the newly encoded features, increasing the dispersion of $P_T(\cdot \mid r)$ across different realizations of $r$: $S(\theta_{k+1}) \geq S(\theta_k)$. By Proposition~\ref{prop:grad-decomp}(ii), the deviation variance $\mathbb{E}_r[\|\delta\|^2]$ grows with $S(\theta)$, and the cycle restarts with a strictly larger deviation magnitude.

Let $\rho_k \triangleq \mathbb{E}_r[\|\delta(\theta_k; r)\|^2] \big/ \mathbb{E}_r[\|g(\theta_k; r)\|^2]$ denote the fraction of gradient variance attributable to the deviation component. The feedback loop drives $\rho_k$ monotonically upward: early in training, $\rho_k$ remains small since $\|g^*\|$ dominates; as $\|g^*\| \to 0$ and $\|\delta\|$ grows via the loop, $\rho_k \to 1$. At this point, parameter updates are overwhelmingly determined by the leakage signal, and training collapses.

\subsubsection{The Trilemma}

The analyses of Strategies~A and~B, together with the gradient decomposition, yield the central impossibility result of on-policy self-distillation.

\begin{theorem}[Impossibility Trilemma]
\label{thm:trilemma}
In any distribution-matching framework where teacher and student share parameters (i.e., $P_T$ and $P_S$ are parameterized by the same $\theta$), the following three properties cannot hold simultaneously:
\begin{enumerate}[label=(\alph*),nosep]
    \item \textbf{Objective stability}: the optimization target does not drift between successive steps ($\Delta_T = 0$).
    \item \textbf{Sustained improvement}: the distillation signal does not vanish ($\|\nabla_\theta \mathcal{L}\| \not\to 0$).
    \item \textbf{Leakage-free training}: the deviation component does not drive parameter drift.
\end{enumerate}
\end{theorem}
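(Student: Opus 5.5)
The proof goes by exhausting the possible parameter-management strategies for the teacher. Every such strategy either keeps the target fixed across steps or lets it move with $\theta$. In each branch I will show that one of (a), (b), (c) must fail. I read (a) as $\Delta_T=0$ for every step, (b) as $\|\nabla_\theta\mathcal{L}\|\not\to 0$ along the trajectory, and (c) as the per-sample deviation $\delta(\theta;r)$ of Proposition~\ref{prop:grad-decomp} not being the dominant component of the update. I will argue by contradiction: assume (a) and (b) hold, then deduce that (c) fails.

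\emph{Step 1 (a) forces a frozen target.} With shared parameters, $P_T^{\theta}$ is a function of $\theta$. By the expression for $\Delta_T$ in Theorem~\ref{thm:instability}, $\Delta_T=0$ at every step holds generically only if the target used at step $k$ does not change with $\theta_k$. That means we are in Strategy~A with some fixed $\theta_0$, or in a piecewise-frozen scheme such as periodic synchronization, which is Strategy~A on each interval plus jumps at which $\Delta_T\neq 0$. If the online case (Strategy~B) is excluded, (a) reduces to Strategy~A.

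\emph{Step 2: under Strategy~A, (b) requires the deviation to dominate.} By Theorem~\ref{thm:kl-decomp}, $\mathcal{L}_A=\mathcal{L}^*+I$, where $I$ is independent of $\theta$. The expected gradient therefore equals $g^*(\theta)$, and it vanishes as $P_S\to\bar P_T^{\theta_0}$ (Proposition~\ref{prop:frozen-ceiling}). So if the signal keeps a non-vanishing magnitude along the trajectory, which is how I read (b) at the level of the per-sample updates actually applied, the non-vanishing part must come from $\delta(\theta;r)$. Its second moment stays bounded below by $\sum_v \mathrm{Var}_r[P_T(v\mid r)]\,\|\nabla_\theta\log P_S(v)\|^2>0$ whenever $I>0$. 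Hence $\rho_k\to1$, and updates are driven by the deviation, contradicting (c). Alternatively, if the signal really vanishes, (b) fails.

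\emph{Step 3: the remaining case.} If (a) is dropped, Strategy~B applies. Proposition~\ref{prop:feedback-loop} shows that $S(\theta_k)$ is nondecreasing and that $\rho_k\to1$, so (c) fails again. Combined with Theorem~\ref{thm:instability}, this shows that retaining (b) and (c) while dropping (a) is also not available. That is not strictly needed for the trilemma, but it confirms that every pair is tight.

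The main obstacle is Step 2: making the reading of (b) precise enough that ``$\|\nabla\mathcal{L}\|\not\to0$'' under a frozen teacher genuinely forces $\delta$ to dominate. Otherwise (b) simply fails via Proposition~\ref{prop:frozen-ceiling}, and the argument becomes ``(a) implies not (b)'', which is still enough for the trilemma. I would present the case split so that both sub-branches close, and I would keep the whole statement at the same level of rigor as Propositions~\ref{prop:frozen-ceiling} and~\ref{prop:feedback-loop}, on which it rests.
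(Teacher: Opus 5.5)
Your argument holds at the level of rigor the paper itself works at. It uses the same ingredients: Proposition~\ref{prop:frozen-ceiling} for the frozen teacher, Theorem~\ref{thm:instability} for the online teacher, Propositions~\ref{prop:grad-decomp} and~\ref{prop:feedback-loop} for leakage, and the fact that snapshot schemes break (a) at refresh times. The logic, however, is organized differently.

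The paper's decisive step does not depend on the strategy. Whenever $I(Y_t;R\mid X)>0$, Proposition~\ref{prop:grad-decomp}(ii) gives $\mathbb{E}_r\|\delta(\theta;r)\|^2>0$ however the teacher is managed, so (c) fails outright. The frozen/online/snapshot case split then only shows that (a) and (b) also exclude each other.

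You instead read (c) as ``$\delta$ does not dominate the update'', which ties its failure to $g^*\to0$ rather than to $I>0$ alone. That reading gives the three properties a genuine interaction. Under the paper's reading the theorem degenerates to ``(c) is impossible, and (a) and (b) trade off'', which matches Table~\ref{tab:trilemma}, where no OPSD strategy attains even two of the properties.

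The cost of your reading is that you must first characterize the strategies satisfying (a), and Step~1 does this only generically. A moving teacher that happens to give $\Delta_T=0$ while $g^*$ stays non-zero is not excluded, whereas the paper's argument disposes of it automatically. So Step~1, not Step~2, is the soft spot of your route.

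Step~2 also needs no per-sample reinterpretation of (b). Read as stated, for the expected objective, (b) fails under Strategy~A directly by Proposition~\ref{prop:frozen-ceiling}. That is exactly the paper's move, and it is your own fallback sub-branch.

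One technical caution for your $\rho_k\to1$ step. You only need $\mathbb{E}_r\|\delta\|^2$ to stay bounded away from zero, and the diagonal sum $\sum_v\mathrm{Var}_r[P_T(v\mid r)]\,\|\nabla_\theta\log P_S(v)\|^2$ is not a valid lower bound in general. Because $\sum_v[P_T(v\mid r)-\bar P_T(v)]=0$, the cross terms can be negative. In logit coordinates $\nabla_z\log P_S(v)=e_v-P_S$, so $\delta=-(P_T(\cdot\mid r)-\bar P_T)$ and $\mathbb{E}_r\|\delta\|^2=\sum_v\mathrm{Var}_r[P_T(v\mid r)]$ exactly. This can be strictly smaller than the weighted diagonal sum. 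Under a frozen teacher this value is constant, so your conclusion survives, at least in logit coordinates; in $\theta$ coordinates you additionally need the logit Jacobian not to degenerate. Justify the step this way rather than through the diagonal bound.
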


Strategy~A (frozen teacher) satisfies (a) by construction ($P_T^{\theta_0}$ is fixed, hence $\Delta_T = 0$), but violates (b) by Proposition~\ref{prop:frozen-ceiling}. Strategy~B (online teacher) satisfies (b) since the teacher continually evolves to provide non-vanishing signal, but violates (a) by Theorem~\ref{thm:instability}. Property~(c) is violated under \textit{either} strategy whenever $I(Y_t; R \mid X) > 0$: by Proposition~\ref{prop:grad-decomp}(ii), the per-sample deviation variance $\mathbb{E}_r[\|\delta\|^2]$ is strictly positive, independently of the teacher management scheme. Under Strategy~B, Proposition~\ref{prop:feedback-loop} further ensures that $\rho_k$ is monotonically non-decreasing, so the leakage intensifies over the course of training. Hybrid strategies (e.g., periodic teacher snapshots) merely interpolate between (a) and (b): within each update interval, (a) holds but (b) progressively degrades as the teacher becomes stale; refreshing the teacher restores (b) while disrupting (a). Property~(c) remains unsatisfied throughout, as the underlying mutual information gap persists regardless of the snapshot schedule.


\subsection{Bayesian Interpretation of RLSD Evidence Weights}
\label{app:bayesian}

This section provides a formal probabilistic derivation of the RLSD evidence ratio $w_t = P_T(y_t)/P_S(y_t)$ introduced in \S\ref{sec:rlsd}, establishing that it corresponds to a sequential Bayesian belief update.

\subsubsection{Modeling Assumption}

\begin{assumption}[Consistent Conditional Approximation]
\label{asm:consistent}
The shared model $\pi_\theta$ provides a consistent approximation to the true conditional distributions in the following sense:
\begin{align}
    P_S(y_t \mid x, y_{<t}) &\triangleq \pi_\theta(y_t \mid x, y_{<t}) \approx P(y_t \mid x, y_{<t}), \\
    P_T(y_t \mid x, r, y_{<t}) &\triangleq \pi_\theta(y_t \mid x, r, y_{<t}) \approx P(y_t \mid x, r, y_{<t}).
\end{align}
\end{assumption}

When the model has sufficient capacity and the privileged information $r$ is provided via in-context conditioning, Assumption~\ref{asm:consistent} is reasonable: a single architecture can faithfully represent both the prior predictive distribution (without $r$) and the posterior predictive distribution (with $r$). This parallels the standard assumption in Bayesian deep learning that a sufficiently expressive model family can approximate the true posterior predictive distribution.

\subsubsection{The Evidence Ratio as Belief Update}

\begin{theorem}[RLSD Weights as Belief Update Ratios]
\label{thm:bayesian}
Under Assumption~\ref{asm:consistent}, the token-level evidence ratio satisfies the identity:
\begin{equation}
    w_t = \frac{P_T(y_t \mid x, r, y_{<t})}{P_S(y_t \mid x, y_{<t})} = \frac{P(r \mid x, y_{\leq t})}{P(r \mid x, y_{<t})},
    \label{eq:belief-update}
\end{equation}
i.e., $w_t$ equals the ratio of the posterior belief in the privileged information $r$ after observing token $y_t$ to the belief before observing $y_t$.
\end{theorem}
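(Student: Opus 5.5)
The identity is Bayes' rule applied to the joint distribution $P(x, r, y_{\le t})$. Assumption~\ref{asm:consistent} lets us replace $\pi_\theta$ by the true conditionals, so we treat the approximations as equalities. Everything below is conditioned on $x$, and $P(\cdot)$ is the single underlying joint law over $(r, y_1, \dots, y_T)$ given $x$. The first step is to fix this joint law and assume $P(r \mid x, y_{<t}) > 0$ and $P(y_t \mid x, y_{<t}) > 0$, so that every ratio is well defined. This positivity holds on any trajectory actually sampled from $P_S$ with $r$ in the support.

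The core step applies Bayes' rule to $r$ and $y_t$ in the context $(x, y_{<t})$:
\begin{equation}
P(r \mid x, y_{\le t}) = \frac{P(y_t \mid x, r, y_{<t})\, P(r \mid x, y_{<t})}{P(y_t \mid x, y_{<t})}.
\end{equation}
Dividing both sides by $P(r \mid x, y_{<t})$ gives
\begin{equation}
\frac{P(r \mid x, y_{\le t})}{P(r \mid x, y_{<t})} = \frac{P(y_t \mid x, r, y_{<t})}{P(y_t \mid x, y_{<t})}.
\end{equation}
Under Assumption~\ref{asm:consistent}, the numerator and denominator on the right are $P_T(y_t)$ and $P_S(y_t)$. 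This yields Eq.~\eqref{eq:belief-update}. As a corollary, taking the product over $t$ telescopes to $\prod_t w_t = P(r \mid x, y)/P(r \mid x)$. This makes precise the reading of the weights as a sequential belief update.

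The main obstacle is conceptual rather than computational. It is the requirement that $P_S$ and $P_T$ be conditionals of one common joint distribution. In particular, $P_S$ must equal the marginal $\sum_r P(r \mid x, y_{<t}) P_T(y_t \mid r)$, which is exactly the $\bar P_T$ of \S\ref{sec:ill-posed-obj} when $r$ is averaged with its posterior given $y_{<t}$. Without this consistency, the step identifying $P(y_t \mid x, y_{<t})$ with $\pi_\theta(y_t \mid x, y_{<t})$ fails, and the equality is only approximate.

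I would state this explicitly: the theorem is exact for the true conditionals, and for $\pi_\theta$ it holds to the extent that Assumption~\ref{asm:consistent} holds. If quantitative control is wanted, I would bound the error in $\log w_t$ by the sum of the log-approximation errors of the two conditionals.
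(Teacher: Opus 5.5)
Your proposal is correct and follows the paper's proof. Both apply Bayes' rule to $(r, y_t)$ given $(x, y_{<t})$, rearrange the result into the ratio of posterior beliefs, and then invoke Assumption~\ref{asm:consistent} to identify the two token conditionals with $P_T$ and $P_S$. Your extra remarks add useful precision that the paper's final $\approx$ leaves implicit: the positivity conditions, the requirement that $P_S$ and $P_T$ be conditionals of a single joint law, and the bound on the error in $\log w_t$ by the sum of the two log-approximation errors.
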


\begin{proof}
Apply Bayes' theorem to the joint distribution $P(r, y_t \mid x, y_{<t})$:
\begin{equation}
    P(y_t \mid x, r, y_{<t}) = \frac{P(r, y_t \mid x, y_{<t})}{P(r \mid x, y_{<t})} = \frac{P(r \mid x, y_{\leq t}) \cdot P(y_t \mid x, y_{<t})}{P(r \mid x, y_{<t})}.
\end{equation}
Dividing both sides by $P(y_t \mid x, y_{<t})$ and applying Assumption~\ref{asm:consistent}:
\begin{equation}
    \frac{P_T(y_t \mid x, r, y_{<t})}{P_S(y_t \mid x, y_{<t})} \approx \frac{P(y_t \mid x, r, y_{<t})}{P(y_t \mid x, y_{<t})} = \frac{P(r \mid x, y_{\leq t})}{P(r \mid x, y_{<t})}. \qedhere
\end{equation}
\end{proof}

\subsubsection{Interpretation}

Theorem~\ref{thm:bayesian} reveals that the RLSD weight $w_t$ is a \textit{sequential Bayesian evidence ratio}: it quantifies the degree to which generating token $y_t$ updates the belief that the privileged information $r$ is consistent with the current trajectory.

Concretely, $w_t > 1$ indicates that generating $y_t$ \textit{strengthens} the posterior belief in $r$, constituting positive evidence for the correct reasoning path. For instance, when solving $2x + 3 = 7$, writing ``$2x = 4$'' substantially increases the belief that the trajectory leads to the correct answer $x = 2$; RLSD assigns such tokens elevated positive credit. $w_t < 1$ indicates that $y_t$ \textit{weakens} the belief in $r$, constituting negative evidence. An erroneous step such as ``$5x = 7$'' would sharply reduce the belief; in an incorrect trajectory, such tokens bear greater blame. $w_t = 1$ indicates that $y_t$ is informationally neutral with respect to $r$, as is typical of formatting connectives (e.g., ``therefore,'' ``we have'') that carry no reasoning content; RLSD assigns these tokens the baseline advantage.

\subsubsection{Telescoping to Sequence-Level Evidence}

Taking the product over all token positions yields a telescoping identity:
\begin{equation}
    \prod_{t=1}^{T} w_t = \prod_{t=1}^{T} \frac{P(r \mid x, y_{\leq t})}{P(r \mid x, y_{<t})} = \frac{P(r \mid x, y)}{P(r \mid x)}.
\end{equation}
The right-hand side is the sequence-level Bayesian evidence ratio: the total belief update after observing the complete trajectory $y$. The per-token weights $\{w_t\}_{t=1}^T$ thus provide a fine-grained \textit{decomposition} of this sequence-level evidence into individual token contributions, elevating the granularity of credit assignment from the sequence level (as in GRPO) to the token level.

\subsubsection{Contrast with OPSD: Logical Attribution vs.\ Behavioral Cloning}

Theorem~\ref{thm:bayesian} clarifies a fundamental distinction between RLSD and distribution matching methods.

OPSD performs \textit{behavioral cloning}: it requires the student to replicate the teacher's output distribution $P_T(\cdot \mid x, r)$ at every token position. If the teacher, having observed the reference solution, favors a particular phrasing (e.g., ``according to the hint''), OPSD drives the student to adopt the same phrasing, even when it carries no reasoning value. The objective is to minimize the \textit{distributional distance} between $P_S$ and $P_T$.

RLSD performs \textit{logical credit attribution}: it does not require the student to imitate any specific token choice of the teacher. The weight $w_t$ measures a single property: whether the token generated by the student constitutes positive Bayesian evidence for the correct answer $r$. As long as a student-generated token increases the posterior belief in $r$ ($w_t > 1$), it receives elevated credit regardless of whether it matches the teacher's preferred wording. RLSD is concerned not with \textit{what} the student says, but with the \textit{informational contribution} of what it says toward the correct derivation.

In information-theoretic terms, $w_t$ quantifies the \textit{per-token information gain} toward the correct answer. RLSD allocates credit accordingly: tokens that advance the reasoning are rewarded, tokens that derail it are penalized, and tokens that are informationally inert receive the baseline advantage. No imitation of the teacher's distributional shape is involved at any point.

\subsubsection{Connection to the OPSD Gradient}

It is instructive to note that the importance weight $P_T(v \mid r)/P_S(v)$ appearing in the OPSD per-sample gradient (Proposition~\ref{prop:grad-decomp}) is mathematically identical to the evidence ratio $w_t$ used in RLSD. This is not coincidental: both methods operate on the same underlying quantity, but employ it in fundamentally different ways.

In OPSD, $P_T/P_S$ serves as a \textit{gradient weight} that spans the entire vocabulary $\mathcal{V}$, driving $P_S$ to match the shape of $P_T$. The per-sample deviation $[P_T(v \mid r) - \bar{P}_T(v)]$ is inseparable from the expected signal (Proposition~\ref{prop:grad-decomp}), causing $r$-specific patterns to infiltrate parameter updates.

In RLSD, $P_T/P_S$ serves as a \textit{stop-gradient scalar credit multiplier} applied only to the on-policy sampled token $y_t$. It modulates the \textit{magnitude} of the existing advantage but does not enter the gradient direction. The privileged information influences only how much credit each token receives, not which tokens are reinforced or penalized, nor the direction of parameter updates.

It is precisely this change in usage, from distribution matching to credit assignment, from gradient weight to scalar multiplier, that transforms the same mathematical quantity from the source of leakage into a tool for precise credit attribution.

\subsubsection{Limitations of the Bayesian Approximation}
\label{app:bayesian-limitations}

The Bayesian interpretation (Theorem~\ref{thm:bayesian}) relies on Assumption~\ref{asm:consistent}, which may not hold exactly in practice. When the model's in-context conditioning is imperfect, e.g., when $r$ is excessively long or the model has limited capacity, the evidence ratio $P_T/P_S$ becomes a noisy approximation of the true belief update ratio.

However, the structural guarantees of RLSD \textit{do not depend on Assumption~\ref{asm:consistent}}. The direction anchoring property ($\mathrm{sign}(\hat{A}_t) \equiv \mathrm{sign}(A)$, \S\ref{sec:rlsd}) and the support set isolation (gradients act only on student-sampled tokens) are consequences of the algorithm design (stop-gradient, on-policy sampling, scalar modulation, clipping) and hold regardless of the quality of the Bayesian approximation. Similarly, the leakage-free guarantee established in Appendix~\ref{app:leakage-free} and the resolution of the impossibility trilemma (Appendix~\ref{app:trilemma}) are purely structural results. This separation ensures that all theoretical guarantees remain valid even when the model is an imperfect Bayesian agent.


\subsection{Formal Leakage-Free Guarantee for RLSD}
\label{app:leakage-free}

Building on the Bayesian interpretation in Appendix~\ref{app:bayesian} and the diagnosis of distribution matching in \S\ref{sec:ill-posed}, this section establishes that RLSD is structurally immune to privileged information leakage and verifies that it circumvents all three deficiencies identified in \S\ref{sec:gradient-structure} and Appendix~\ref{app:trilemma}.

\subsubsection{Gradient Structure of RLSD}

The RLSD policy gradient takes the form:
\begin{equation}
    \nabla_\theta \mathcal{J}_{\text{RLSD}}(\theta) = \mathbb{E}_{x}\, \mathbb{E}_{y \sim \pi_\theta(\cdot \mid x)} \left[ \frac{1}{G}\sum_{i=1}^G \frac{1}{|y^{(i)}|}\sum_{t=1}^{|y^{(i)}|} \hat{A}_t^{(i)} \cdot \nabla_\theta \log \pi_\theta(y_t^{(i)} \mid x, y_{<t}^{(i)}) \right],
\end{equation}
where the token-level advantage is constructed as specified in Eqs.~\eqref{eq:delta}\textendash\eqref{eq:final}. This gradient expression differs from the OPSD gradient in two structural properties.

\noindent \textbf{Property 1: Direction anchoring.}
The token-level advantage $\hat{A}_t$ strictly preserves the sign of the sequence-level advantage $A$:
\begin{equation}
    \mathrm{sign}(\hat{A}_t) = \mathrm{sign}(A) \quad \forall\, t,
\end{equation}
since $\mathrm{clip}(w_t, 1 - \epsilon_w, 1 + \epsilon_w) > 0$. This ensures that the environment reward has \textit{exclusive authority} over the direction of policy updates: all tokens in correct trajectories are reinforced, and all tokens in incorrect trajectories are penalized. The teacher's privileged information, channeled through $w_t$, modulates only the \textit{relative intensity} of reinforcement or penalization across tokens within a trajectory, but can never flip the direction. In contrast, the OPSD gradient direction at each token position is determined by $P_T(v \mid r)$ or its difference from $P_S(v)$, so the teacher's privileged preference directly controls \textit{where} the gradient points.

\noindent \textbf{Property 2: Student-determined support.}
The expectation in the RLSD gradient is taken over trajectories $y \sim \pi_\theta(\cdot \mid x)$, sampled from the student's own policy without access to $r$. The log-derivative $\nabla_\theta \log \pi_\theta(y_t \mid x, y_{<t})$ is evaluated only at tokens the student itself generates. The gradient's support, defined as the set of token sequences receiving non-zero gradient signal, is therefore identical to the support of $\pi_\theta(\cdot \mid x)$.

In contrast, the OPSD per-sample gradient $g(\theta; r) = -\sum_{v \in \mathcal{V}} P_T(v \mid r) \nabla_\theta \log P_S(v)$ ranges over the entire vocabulary $\mathcal{V}$: tokens that the teacher strongly favors due to $r$ receive $P_T(v \mid r)$-weighted gradient contributions, actively pulling unseen privileged patterns into the parameter updates. RLSD eliminates this channel entirely.

\subsubsection{Zero-Leakage Guarantee}

Combining the two structural properties yields the following guarantee.

\begin{theorem}[Leakage-Free Training Under RLSD]
\label{thm:leakage-free}
In the RLSD policy gradient objective, the privileged information $r$ enters the computation only through the stop-gradient scalar $w_t$, and $w_t$ is isolated at three levels:
\begin{enumerate}[label=(\roman*),nosep]
    \item \textbf{Directional isolation.} By Property~1, $\mathrm{sign}(\hat{A}_t) = \mathrm{sign}(A)$, so $r$ cannot influence the gradient sign of any token. The direction of parameter updates in $\theta$-space is fully determined by the student-sampled token $y_t$ (via $\nabla_\theta \log \pi_\theta(y_t \mid x, y_{<t})$) and the environment correctness judgment (via $\mathrm{sign}(A)$), neither of which contains information about $r$.
    \item \textbf{Support isolation.} By Property~2, any token $y_{\mathrm{leak}} \notin \mathrm{supp}(\pi_\theta(\cdot \mid x, y_{<t}))$ that exists only in the privileged mode has strictly zero sampling probability and contributes zero expected gradient.
    \item \textbf{Magnitude boundedness.} The weight $w_t$ is clipped to $[1 - \epsilon_w, 1 + \epsilon_w]$ and further attenuated by the mixing coefficient $\lambda$, strictly bounding its influence. As the student internalizes reasoning ability and $P_S \to P_T$, the weights satisfy $w_t \to 1$, and RLSD automatically degrades to vanilla GRPO.
\end{enumerate}
Consequently, there exists no mathematical pathway through which patterns from the privileged information $r$ can be injected into the direction of parameter updates. The privileged information in RLSD influences only the \textit{magnitude} of token-level credit, not \textit{which} tokens are reinforced or penalized, nor \textit{whether} reinforcement or penalization occurs.
\end{theorem}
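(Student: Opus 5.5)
The argument is structural: I will write out the per-sample RLSD gradient explicitly and check where $r$ appears in it. Fix a question $x$ and a group of trajectories $y^{(1)},\dots,y^{(G)}\sim\pi_\theta(\cdot\mid x)$. The stochastic gradient is
\[
\hat g(\theta;r)=\frac{1}{G}\sum_{i}\frac{1}{|y^{(i)}|}\sum_t A^{(i)}\,c_t^{(i)}(r)\,\nabla_\theta\log\pi_\theta(y^{(i)}_t\mid x,y^{(i)}_{<t}),
\]
where $c_t^{(i)}(r)=(1-\lambda)+\lambda\,\mathrm{clip}(w_t^{(i)},1-\epsilon_w,1+\epsilon_w)$. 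By Eq.~\eqref{eq:delta}, $w_t$ is a function of the stop-gradient quantity $\Delta_t$, so $\nabla_\theta c_t^{(i)}=0$. Consequently, $r$ enters $\hat g$ only through the nonnegative scalars $c_t^{(i)}(r)$. The remaining factors carry no information about $r$: the vectors $\nabla_\theta\log\pi_\theta(y_t\mid x,y_{<t})$ and the trajectories themselves do not depend on $r$, and the advantages $A^{(i)}$ are computed from the verifier reward alone. I will prove the three clauses from this representation.

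For (i), note that $1-\epsilon_w>0$ and $\lambda\in[0,1]$, so $c_t\ge(1-\lambda)+\lambda(1-\epsilon_w)>0$. Hence $\mathrm{sign}(\hat A_t)=\mathrm{sign}(A)$, which is Property~1. Each token's contribution is therefore a positive rescaling of the $r$-free vector $A\,\nabla_\theta\log\pi_\theta(y_t\mid\cdot)$. It follows that $\hat g(\theta;r)$ lies in the closed convex cone generated by these $r$-free per-token vectors, and this cone is the same for every realization of $r$. This gives the precise meaning I will attach to ``direction'': $r$ can move the update only within a fixed cone determined by the student's samples and the environment signs.

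For (ii), I will invoke Property~2. The expected gradient is $\mathbb{E}_{y\sim\pi_\theta(\cdot\mid x)}[\cdots]$, so any token $v$ with $\pi_\theta(v\mid x,y_{<t})=0$ is never sampled and contributes $0$ to the expectation. I will contrast this with the OPSD per-sample gradient of Proposition~\ref{prop:grad-decomp}, which weights every $v\in\mathcal V$ by $P_T(v\mid r)$.

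For (iii), clipping gives $|c_t-1|\le\lambda\epsilon_w$, so $\|\hat g(\theta;r)-\hat g_{\mathrm{GRPO}}(\theta)\|\le\lambda\epsilon_w\,\|\text{per-token sum}\|$, where $\hat g_{\mathrm{GRPO}}$ is the same expression with all $c_t=1$. This bounds the $r$-dependent deviation. The analogue of $\delta$ in Proposition~\ref{prop:grad-decomp} is $\sum A(c_t(r)-\bar c_t)\nabla\log\pi$, and its variance is at most $(\lambda\epsilon_w)^2$ times an $r$-independent quantity. In particular, unlike in OPSD, this variance does not scale with $I(Y_t;R\mid X)$. For the limiting claim, $P_S(y_t)\to P_T(y_t)$ gives $\Delta_t\to0$, hence $w_t\to1$ and $c_t\to1$, which is exactly GRPO. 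The schedule $\lambda\to0$ gives the same reduction.

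I expect the main obstacle to be the concluding sentence that there is ``no pathway'' for leakage. Positive reweighting still changes the direction of the sum within the cone, so literal independence of the update direction from $r$ is false. I will therefore state the guarantee as the cone-membership result together with the $\lambda\epsilon_w$ bound on the $r$-dependent deviation, and treat ``leakage-free'' as this restricted, bounded form of dependence.
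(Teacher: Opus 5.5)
Your treatment of (i)--(iii) follows the paper's own argument. The paper's justification consists of Property~1 (the clipped weight is positive, so $\mathrm{sign}(\hat A_t)=\mathrm{sign}(A)$), Property~2 (the score function is evaluated only at student-sampled tokens), and the clip/$\lambda$ bound. It then reads per-token sign anchoring as anchoring of the update direction. You are right that the concluding ``no pathway'' sentence does not follow, and it fails more sharply than your cone formulation suggests.

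Put $s=\mathrm{sign}(A)$ and $\psi_s(\Delta)=s\bigl(\mathrm{clip}(e^{s\Delta},1-\epsilon_w,1+\epsilon_w)-1\bigr)$. This function is nondecreasing in $\Delta$, vanishes at $0$, and satisfies $|\psi_s|\le\epsilon_w$. Then $A(c_t-1)=\lambda|A|\,\psi_s(\Delta_t)$, so
\[
\hat g(\theta;r)-\hat g_{\mathrm{GRPO}}(\theta)=\frac{\lambda}{G}\sum_i\frac{1}{|y^{(i)}|}\sum_t|A^{(i)}|\,\psi_{s_i}\bigl(\Delta_t^{(i)}\bigr)\,\nabla_\theta\log\pi_\theta\bigl(y_t^{(i)}\mid x,y_{<t}^{(i)}\bigr).
\]
The $r$-dependent part is therefore the paper's sampled-token reverse-KL self-distillation update (Eq.~\eqref{eq:unified} with $\hat A_t=\Delta_t$). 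The only changes are that $\Delta_t$ passes through a bounded monotone transform and is scaled by $\lambda|A|$. By the paper's own Student's Top-1 criterion, this term carries $r$-specific information.

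Concretely, suppose every rollout of a mixed-outcome group emits the same token at the same prefix (a shared opening, say). Then all copies share one score vector $g$ and one $\Delta$. With equal lengths, the net GRPO coefficient on $g$ is proportional to $\sum_i A^{(i)}=0$, whereas the net RLSD coefficient has the sign of $\Delta$. Whether that token is net reinforced or penalized is decided by the privileged teacher, so even the ``which tokens'' clause fails in the aggregate sense. Likewise, any token shared at an identical prefix by one correct and one incorrect rollout puts both $g$ and $-g$ among your cone's generators. The cone then contains the line through $g$, and with many such tokens, cone membership is close to vacuous.

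The substantive guarantee is therefore your (iii), which needs two repairs. First, the bound must use the sum of per-token norms,
\[
\|\hat g(\theta;r)-\hat g_{\mathrm{GRPO}}(\theta)\|\le\frac{\lambda\epsilon_w}{G}\sum_i\frac{1}{|y^{(i)}|}\sum_t|A^{(i)}|\,\bigl\|\nabla_\theta\log\pi_\theta\bigl(y_t^{(i)}\mid x,y_{<t}^{(i)}\bigr)\bigr\|,
\]
not the norm of the per-token sum, since $\hat g_{\mathrm{GRPO}}$ can cancel along $g$ while the deviation does not. Second, the bound is absolute rather than relative to $\|\hat g_{\mathrm{GRPO}}\|$. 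It therefore does not keep the update direction near GRPO's when the GRPO signal is weak. That is exactly the second-phase mechanism the paper invokes against OPSD, and only the schedule $\lambda\to0$ removes it.

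Relatedly, the right contrast with OPSD is the tunable cap $(\lambda\epsilon_w)^2$, not independence from $I(Y_t;R\mid X)$. The deviation variance in Proposition~\ref{prop:grad-decomp} is also capped, because $\mathrm{Var}_r[P_T(v\mid r)]\le 1/4$. Below the clip threshold, yours grows with the teacher's $r$-sensitivity at $y_t$, just as OPSD's does.

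Two smaller points. For a softmax policy every token has positive probability, so (ii) is vacuous. Moreover, $\sum_v P_S(v)\bigl(P_T(v\mid r)/P_S(v)-1\bigr)\nabla_\theta\log P_S(v)=\sum_v P_T(v\mid r)\nabla_\theta\log P_S(v)$. So on the $A>0$ branch (treating $A$ as fixed), without the clip, importance-weighting the sampled token would reproduce the full-vocabulary OPSD direction in expectation. It is the clip, not on-policy sampling, that suppresses privileged-only tokens. Finally, in the paper's setup $r$ is the ground-truth answer that the verifier itself uses. Your claim that $A$ carries no information about $r$ should therefore read: beyond the verifier's correctness bit, $r$ enters only through $c_t$.
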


\subsubsection{Resolution of the Impossibility Trilemma}

Appendix~\ref{app:trilemma} establishes that distribution matching under shared parameters cannot simultaneously satisfy objective stability (a), sustained improvement (b), and leakage-free training (c). We verify that RLSD satisfies all three.

\noindent \textbf{(a) Objective stability.}
RLSD optimizes the environment reward $R(x, y) \in \{0, 1\}$, an external signal independent of $\theta$. The teacher contributes only through the stop-gradient scalar $w_t$, which does not constitute part of the optimization objective. The teacher drift term $\Delta_T$ from Theorem~\ref{thm:instability} is therefore inapplicable: the optimization anchor is a fixed external reward, not a drifting internal distribution. Even as $\theta$ updates cause $w_t$ to change at subsequent steps, this affects only the credit magnitude, not the gradient direction (Property~1) or support (Property~2).

\noindent \textbf{(b) Sustained improvement.}
RLSD is architecturally designed as a \textit{curriculum-aware} training strategy that seamlessly unifies two complementary learning phases. In the early stage, when the student's policy is far from optimal and token-level credit discrimination is most valuable, the self-distillation signal provides dense, fine-grained guidance that accelerates convergence well beyond what uniform advantages can achieve. As training progresses, the mixing coefficient $\lambda$ is linearly decayed to zero, smoothly transitioning RLSD into vanilla GRPO. This scheduled transition reflects a principled design choice: the teacher's dense credit assignment is most beneficial when the student's policy is still coarse-grained and requires fine-grained steering, whereas in the later stage, the environment reward alone suffices to drive continued improvement. Crucially, the GRPO policy gradient remains non-zero as long as the student has not reached the optimal policy, ensuring that training does not stall after the transition. This two-phase structure, dense credit assignment for rapid early improvement followed by standard RLVR for sustained long-term optimization, allows RLSD to inherit the convergence ceiling of GRPO while substantially accelerating the path toward it. The explicit scheduling of $\lambda$ also provides practitioners with a transparent and controllable mechanism to balance the intensity of teacher guidance against training stability, making RLSD a practical and fully compatible enhancement to existing RLVR pipelines.

\noindent \textbf{(c) Leakage-free training.}
By Theorem~\ref{thm:leakage-free}, the gradient direction is determined entirely by the environment reward and the student's on-policy samples. The deviation component $\delta$ from Proposition~\ref{prop:grad-decomp} does not arise, because RLSD does not perform distribution matching and therefore does not generate the $[P_T(v \mid r) - \bar{P}_T(v)]$ bias that drives the gradient direction in OPSD. The self-reinforcing feedback loop of Proposition~\ref{prop:feedback-loop} is likewise absent: even if shared parameters cause $S(\theta)$ to change, this affects only the numerical value of $w_t$, which, after clipping, enters $\hat{A}_t$ as a bounded scalar multiplier that does not alter the gradient direction or the token-level support.

Table~\ref{tab:trilemma} summarizes the trilemma from the perspective of both OPSD strategies and RLSD.

\begin{table}[h]
\centering
\caption{Comparison of OPSD parameter management strategies and RLSD across the three desiderata identified in the impossibility trilemma.}
\label{tab:trilemma}
\small
\begin{tabular}{lccc}
\toprule
\textbf{Property} & \textbf{OPSD (Frozen Teacher)} & \textbf{OPSD (Online Teacher)} & \textbf{RLSD} \\
\midrule
(a) Objective stability & $\checkmark$ & $\times$ & $\checkmark$ \\
(b) Sustained improvement & $\times$ & $\checkmark$ & $\checkmark$ \\
(c) Leakage-free training & $\times$ & $\times$ & $\checkmark$ \\
\bottomrule
\end{tabular}
\end{table}

\newpage

\end{document}